\newtheorem{mythm}{Theorem}
\newtheorem{myprops}{Proposition}
\newtheorem{myasum}{Assumption}
\DeclareMathOperator*{\argmax}{\arg\,\max}
\newcommand{\fig}{{Figure }}
\newcommand{\tables}{{Table }}
\newcommand{\eq}{{Eq. }}
\newcommand{\alg}{{Algorithm }}
\newcommand{\ourmethod}{\text{MaxIn-Elo}}
\newcommand{\ourmethodii}{\text{MaxIn-mElo}}
\newcommand{\yali}[1]{{ \color{magenta}[yali: #1]}}
\newcommand{ \du}[1]{\textcolor{red}{#1}}
\newcommand{\xu}[1]{\textcolor{red}{[xu: #1]}}
\def\rc{\color{magenta}}
\title{Learning to Identify Top Elo Ratings:\\
A Dueling Bandits Approach}
\author{
%
 Xue Yan\textsuperscript{\rm 1}\textsuperscript{\rm 2}, 
  Yali Du \thanks{Corresponding to Yali Du $\langle$\href{mailto:yali.du@kcl.ac.uk}{yali.du@kcl.ac.uk}$\rangle$. } \textsuperscript{\rm 3},
  Binxin Ru \textsuperscript{\rm 4},
  Jun Wang \textsuperscript{\rm 5},
  Haifeng Zhang\textsuperscript{\rm 1}\textsuperscript{\rm 2},
  Xu Chen\textsuperscript{\rm 6}
 }
\begin{document}

\maketitle

\begin{abstract}


The Elo rating system is widely adopted to evaluate the skills of (chess) game and sports players. Recently it has been also integrated into  machine learning algorithms in evaluating the performance of computerised AI agents. However, an accurate estimation of the Elo rating (for the top players) often requires many rounds of competitions, which can be expensive to carry out. 
In this paper, 
to improve the sample efficiency of the Elo evaluation (for top players), we propose an efficient online match scheduling algorithm. Specifically, we identify and match the top players through a dueling bandits framework and tailor the bandit algorithm to the gradient-based update of Elo. We show that it reduces the per-step memory and time complexity to constant, compared to the traditional likelihood maximization approaches requiring $O(t)$ time.
Our algorithm has a regret guarantee of $\tilde{O}(\sqrt{T}$), sublinear in the number of competition rounds and has been extended to the multidimensional Elo ratings for handling intransitive games. We empirically demonstrate that our method achieves superior convergence speed and time efficiency on a variety of gaming tasks.

\end{abstract}

\section{Introduction}

In this paper, we investigate the selection of best multi-agent strategies under the Elo rating systems.
The evaluation of the competition outcome has received lots of attention, especially in view of the successful usage of reinforcement learning in StarCraft \cite{Alphastar, han2019grid,du2019liir}, Game of Go \cite{silver2017mastering} and video games \cite{mnih2015human}.
The Elo rating system \citep{elo1978rating} is a predominant and valuable algorithm for evaluating and ranking agents. In the widely adopted Bradley-Terry model \citep{hunter2004mm} for Elo, each player is assigned a numerical rating which is updated with competition outcomes via online stochastic gradient descent. 
Further, for dealing with non-transitive relations between interacting agents such as the game of \textit{Rock-Paper-Scissors}, \citet{balduzzi2018re} proposes multidimensional Elo (mElo), which decomposes a game into transitive and cyclic parts to handle intransitive skills and evaluates different strategies by computing Nash-averaging. 

In practical settings when a competition is expensive to conduct, updating Elo rating in a sample efficient way is highly valuable. To achieve such sample efficiency, we need a way to select the most informative pairs for evaluation. 
Two popular sampling approaches are Round-robin \citep{rasmussen2008round} and Elimination tournament \citep{groh2012optimal}; The Round-robin \citep{rasmussen2008round} is widely used in sport scheduling to balance the total time, venue usage and fairness of tournaments. It would arrange each team to play against all the others in as few as possible days while satisfying some constraints such as each team 
not playing twice in the same day to promote game fairness. By contrast, the Elimination tournament \citep{groh2012optimal} only allows the winners at each round to proceed to the next round, so the stronger team will have the chance to play more times. A recent approach, RG-UCB, \citet{rowland2019multiagent} introduces an adaptive sampling scheme to estimate the accurate ranking among all agents. RG-UCB considers sampling of agent match-ups as a collection of pure exploration bandit problems \citep{bubeck2011pure} and requires enough pairwise comparison for estimating each pair of strategies.  

However, these tournament matching/sampling methods suffer from two major limitations which prohibit their wide usage in the modern large scale evaluations.
Firstly, both the Round-robin and the Elimination tournament organise competitions following a pre-designed schedule, and the Elimination tournament scheduling may need some prior knowledge on the players' skill. Also each pair of players only compete once in both schemes so the results can be highly noisy.
Secondly, the main idea behind the matching schemes of the RG-UCB and the Round-robin is random sampling, which fails to pay more attention to more promising players and/or pairs with higher uncertainty in competition outcome. Thus, they are less sample efficient in identifying the best players.

In this work, we propose two sampling algorithms, named MaxIn-Elo and MaxIn-mElo, for the update of Elo and mElo rating systems respectively. Specifically, we maintain a candidate set with promising players using UCB-based (Upper Confidence Bound) dueling bandits and {then} select the pair with {the} highest uncertainty in competition outcome at each round.
{On the one hand}, our algorithms are adapted to the gradient-based update of Elo rating systems, thus more memory and time efficient compared to a prior work \citet{saha2020regret} which relies on maximum likelihood estimation (MLE). {
One the other hand, we extend our method to update mElo (multidimensional Elo) ratings to handle intransitive games, while \citet{saha2020regret} is based on a generalized linear model and can only fit to the transitive games.}
{To the best of our knowledge}, this is the first work that {enables online gradient-based update for} dueling bandits{, and a theoretical guarantee on the cumulative regret is provided.}
Also compared to a previous dueling bandit method descending through randomly sampled gradients at each time step \citep{yue2009interactively}, our method, by selecting the pairs with higher information gains
from a set of top candidates, is more sample efficient and are guaranteed to converge at $\tilde{O}(\sqrt{T})$\footnote{$\tilde{O}$ ignores poly-logarithmic factors}.

In summary, our contributions are three-fold:
Firstly, we are the first to propose two online active sampling algorithms $\ourmethod$ and $\ourmethodii$ that select maximum informative pairs with dueling bandits to update Elo and mElo ratings. 
Secondly, we give the regret analysis of our proposed algorithms and show that the algorithms converge at $\tilde{O}(\sqrt{T})$.
Thirdly, we demonstrate empirically on synthetic and real-world games that our algorithms achieve significantly lower cumulative regret than all baselines. Notably, our methods outperform MaxInP \cite{saha2020adversarial} which uses maximum likelihood for more accurate estimation while with lower time and memory complexity.
{Code of this project is available at \url{https://github.com/yanxue7/MaxIn-Elo.git}.}
\section{Related Work}

Multi-agent evaluation has attracted wide attention in ranking of players
\cite{silver2017mastering,lai2015giraffe,arneson2010monte,gruslys2018reactor}
and in selecting stronger strategies in meta games \cite{muller2020generalized,czarnecki2020real}. 
{There are many methods used for multi-agent evaluation problem.} 
The Elo rating system is widely used for two-player games such as chess and tennis. It increases (decreases) player's rating according to player wins (loss) a competition, and updates ratings by online stochastic gradient descent (SGD),  which is computationally efficient and simple to implement. While the Elo rating system cannot handle intransitive games such as rock-paper-scissors, multidimensional Elo (Melo) \citep{balduzzi2018re} was introduced. It decomposes the win-loss matrix of an intransitive game into the transitive component and cyclic component baked in Hodge decomposition theory \cite{jiang2011statistical}. 
{
 $\alpha$-rank \cite{omidshafiei2019alpha} is another popular counterpart in tackling intransitive games; recent attempts  improve its sample efficiency based on noisy comparisons \cite{du2021estimating,rowland2019multiagent,omidshafiei2019alpha} and scalability by stochastic optimization \cite{yang2020alphaalpha}. }
Despite various evaluation algorithms discussed, how to sample agent pairs at each round is of high value to realize these algorithms in large-scale evaluation tasks. 

{ We consider dueling bandits for online match scheduling in the evaluation of players.
}
The concept of dueling bandits was firstly proposed in \cite{yue2009interactively}.
Compared to traditional bandits algorithms which pull one arm at each round and receive the reward of this arm directly, dueling bandits pull arm-pair at each round and only get a binary comparison result. 
DBGD  \citep{yue2009interactively} models a convex optimization problem as the dueling bandits problem which aims to find the best point in a convex space, and DBGD uses a random gradient as the direction of exploration for selecting the next arm-pair.  
\citet{yue2012k} formulates the best player identification as a dueling bandits problem with noisy comparison results and an underlying winning probability matrix, and proposes two algorithms as well as their corresponding regret bounds.
{
These algorithms identify best arms based on the observed binary feedback however do not  learn the player's skills (ratings), which is helpful in predicting future competition outcomes.} \citet{szorenyi2015online} regards the ranking of $M$ alternatives (e.g. human players or agents) as a dueling bandits problem. They introduce the confidence interval of rating or winning probability into the dueling bandits problem, and design algorithms to identify the close-to-optimal item or to obtain the close-to-optimal whole ranking respectively.
\citet{saha2020adversarial} studies the adversarial setting, in which the winning probability is non-stationary because players' skill may change over time. And they measure arms' abilities by estimating Borda score, however Borda score does not possess predictive power of future competition results and the algorithm of \cite{saha2020adversarial} estimates Borda score only by simply calculating the frequency of wins. \citet{heckel2019active} gives an active ranking algorithm that can solve the top-$k$ player identification problem and find the entire sequential ranking among all players.

While existing algorithms could rely on Borda score to update the rankings or design specific sort algorithms to obtain the ranking of items, they are not suitable for the Elo rating systems that adopt stochastic gradient descent to update ratings.
Eearlier attempt \citep{ding2021efficient} proposes SGD-TS for contextual bandit problem, which learns parameters in the generalized linear model through online SGD instead and employs Thompson Sampling (TS) \citep{thompson1933likelihood,agrawal2012analysis,agrawal2013thompson} to encourage exploration in arm-pulling. Compared to UCB-GLM \citep{li2017provably} that adopt maximum likelihood estimators, SGD-TS achieves a similar theoretical cumulative regret bound, but lower time and memory complexity.
However, no prior work has studied the SGD update in dueling bandits setting.

In this work, we will tame dueling bandits for the online match scheduling in the Elo rating system that adopts  stochastic gradient descent.
\citet{saha2020regret} proposes the algorithm that selects Maximum-Informative-Pair (MaxInP) for $K$-armed contextual dueling bandits. This algorithm utilizes the MLE 
to estimate parameter $\hat{\theta}$ and uses UCB \citep{auer2002finite} estimator to narrow down the set of candidate pairs from which the pair of arms with the maximum uncertainty is selected at each round. Our algorithms adopt a similar design as MaxInP \citet{saha2020regret} in calculating the uncertainty of a pair. However, we use an online batch SGD instead of MLE to update Elo rating, which is more time and memory efficient.





\section{Methodology}
\subsection{Background}

Suppose there are $n$ players, the Elo rating system \citep{elo1978rating} assigns a rating $r_x, x\in [n]$ to each player representing its skill. 
Let $r^*$ denote the true ratings of $n$ players. Our aim is to identify the best player among all $n$ players:
\begin{equation}
    x^*=\arg \max_{x\in[n]} r^*_x
\end{equation}
Denote $ P$ as the true winning probability matrix, $p_{xy}$ as the underlying groundtruth probability of $x$ beating $y$. Based on the Bradley-Terry model \cite{hunter2004mm}, the predicted probablility of player $x$ winning $y$ is 
\begin{equation}
    \label{"eq:elo_hat"}
    \hat{p}_{xy}=\sigma ( r_x- r_y).
\end{equation} 
$\sigma(x)$ is a sigmoid function with $\sigma(x)=\frac{1}{1+e^{-x}}$. 
Elo ratings are updated by maximizing the likelihood of win-loss predictions which corresponds to minimizing the loss: 
\begin{equation}\label{eq:elo-objective}
\ell_{\text {Elo }}\left(p_{xy}, \hat{p}_{xy}\right)=-p_{xy} \log \hat{p}_{xy}-\left(1-p_{xy}\right) \log \left(1-\hat{p}_{xy}\right).
\end{equation}
At time $t$ player $x$ compete with player $y$ with outcome $o_{xy}^t$: $o_{xy}^t=1$ if $x$ wins and $o_{xy}^t=0$ otherwise. We can use $o_{xy}^t$ to compute the gradient of Eq. \eqref{eq:elo-objective} and update Elo by gradient descent:
\begin{equation}\label{eq:elo-grad-descent}
r_{x}^{t+1} \leftarrow r_{x}^{t}-\eta \cdot \nabla_{r_{x}} \ell_{\text {Elo }}\left(o_{xy}^{t}, \hat{p}_{xy}^{t}\right)=r_{x}^{t}+\eta \cdot\left(o_{xy}^{t}-\hat{p}_{xy}^{t}\right).
\end{equation}


Let $T$ denote the total number of rounds, at each round $t \in[T]$, we adopt a system that will pull a pair of players $(x_t,y_t)\in[n]\times[n]$ and get comparison result $o_t(x_t,y_t)\sim \text{Bern}(p_{xy}).$ 
The cumulative regret of $T$ rounds is defined as 
\begin{equation}\label{"eq:cumu reg"}
    R(T) = \sum_{t=1}^T [ r^*_{x^*} - \frac{1}{2}( r^*_{x_t}+ r^*_{y_t})].
\end{equation}
The definition is consistent with \citep{saha2020regret}. It measures the reward difference between the best arm and the two selected arms at each round.

\subsection{MaxIn-Elo Algorithm}

We use the notations below in the followed presentations.

$\bullet$ $n$: the number of players.

$\bullet$ $\tau$: the batch size.

$\bullet$ $r$: a vector of $n$ players' ratings. 

$\bullet$ $r^*$: the true ratings of $n$ players.

$\bullet$ $\hat{r}_t$: { estimator by MLE with $t$ round comparisons.}

$\bullet$ $\tilde{r}_j$: the SGD estimator at batch $j$.

$\bullet$ $\bar{r}=\sum_{q=1}^j\tilde{r}_q$: the average of previous SGD iterations. 

$\bullet$ $\| x \|=\sqrt{x^\top x}$: the standard $\ell_2$ norm.

$\bullet$ $e_i$: the $i$-th unit base vector, {i.e., the $i$-dimension equals $1$ and all other components equal $0$.}

$\bullet$ $V_t$: the history matrix recording previous $t$ pulling information defined by $V_t=\sum_{i=1}^{t-1}(e_{x_i}-e_{y_i})(e_{x_i}-e_{y_i})^\top.$

$\bullet$ $\| x\|_V$: a special $\ell_2$-norm associated with matrix $V$ defined by $\| x\|_V=\sqrt{x^\top V x}$.


$\bullet$ $\mathcal{B}$ a neighborhood of  ${r}^*$ with $\mathcal{B}=\{r|\left\|r-{r}^*\right\|\leq 3\}$.

$\bullet$ $\mathcal{C}$: a  neighborhood of 
$\hat{r}_\tau$ with $\mathcal{C}=\{r|\left\|r-\hat{r}_\tau\right\|\leq 2\}$.

$\bullet$ $\prod_\mathcal{C}(.)$: the projection operation defined by:
\begin{align}\label{eq:projection}
\prod_\mathcal{C}(r)=\hat{r}_\tau+\frac{2*(r-\hat{r}_\tau)}{\min\{2,\|r-\hat{r}_\tau\|\}}
\end{align}


\begin{algorithm}[t!]
\caption{$\ourmethod$: Dueling bandits with online SGD for top player identification.  }
\label{alg:dueling-sgd-UCB}
\begin{algorithmic}[1]
    \REQUIRE batch size $\tau$, maximum number of rounds $T$, $N$ players' strategies, parameters $\alpha,\gamma$.
    \ENSURE output $ r$
    \STATE Randomly choose a pair to compare and record as $x_t,y_t, o_t$ for $t \in [\tau]$
    \STATE $V_{\tau+1}=\sum_{t=1}^\tau (e_{x_t}-e_{y_t})(e_{x_t}-e_{y_t})^T$
    \STATE Calculate the maximum-likelihood estimator $\hat{ r}_\tau $ by solving \\
    $\nabla_{ r}\sum_{t=1}^\tau \ell_{\text {Elo}} \left(o_t,\hat{p}(x_t,y_t)\right)=0 $
    \STATE Maintain convex set $\mathcal{C}=\left\{r:\left\|r-\hat{r}_{\tau}\right\| \leq 2\right\}$
    
     \FOR{ $t=\tau+1, \tau+2, \dots, T$}
        \IF{ $t\% \tau =1$ }
        \STATE $j \leftarrow \lfloor(t-1)/\tau \rfloor $ and $\eta_j = \frac{1}{\alpha j}$
        \STATE 
        Calculate gradient $\nabla_{r}l_{j,\tau}(\tilde{r}_{j-1})$ through \eq \eqref{eq:elo-grad-descent}
        \STATE  Update ratings $\tilde{r}_j$ through \eq \eqref{eq:update_jth_ratings}
        \STATE Compute $\bar{r}=\frac{1}{j} \sum_{q=1}^{j} \tilde{r}_{q}$
        \ENDIF
        \STATE Define a candidate optimal set $\mathcal{S}=\{x\ |\ \bar{r}_x-\bar{r}_y+\gamma\
\left\|e_x-e_y\right\|_{V_{t}^{-1}}>0 ,\ \forall y \in[n]/\{x\}\}$
        \STATE Select a pair as:\\
        \quad $(x_t,y_t)=\argmax_{(x,y) \in \mathcal{S}}\left\|e_x-e_y\right\|_{V_{t}^{-1}}$.
        \STATE Let players $(x_t, y_t)$ compete and observe $o_t(x_t, y_t)$
        \STATE Compute $V_{t+1}=V_{t}+(e_{x_t}-e_{y_t})(e_{x_t}-e_{y_t})^T$
        \ENDFOR
\end{algorithmic}
\end{algorithm}

\paragraph{Algorithm Overview}
The main idea of our MaxIn-Elo algorithm is to maintain a candidate set of promising items via UCB and select the most informative pairs out of the set to evaluate at each round. 
Firstly, the ratings are initialized by maximizing likelihood of \eq \eqref{eq:elo-objective} on a batch of randomly sampled pairs with batch size $\tau$. The solution is denoted as $\hat{r}_\tau$ and $\tilde{r}_0=\hat{r}$. Then starting from round $t=\tau+1$, we update $\tilde{r}_j$ every $\tau$ rounds by solving the following objective function
\begin{align}\label{eq:ljtau_mle_obj}
        l_{j,\tau}( r) =\sum_{t=(j-1)\tau+1}^{j\tau} \ell_{\text {Elo }}(o_t,\hat{p}(x_t,y_t)).
\end{align}
The stochastic gradient update of $\tilde{r}_j$ reads
 \begin{align}\label{eq:update_jth_ratings}
          \tilde{r}_{j} \leftarrow \prod_{\mathcal{C}}\left(\tilde{r}_{j-1}-\eta_{j} \nabla_{r}l_{j,\tau}(\tilde{r}_{j-1}) \right).
\end{align}
First,
the strong convexity of the objective function is required for fast convergence, 
and if we select a suitable $\tau$ through \eq \eqref{"eq:tau"}, the aggregated objective function $l_{j,\tau}( r)$ is a $\alpha$-strong convex function when $r\in\mathcal{B}$. 
Second, to ensure $\tilde{r}_j\in\mathcal{B}$, $\tilde{r}_j$ is projected into the convex set $\mathcal{C}$ (also discussed in the proof of Lemma \ref{"lm:mylemma2"}).

For each update, a batch of pairs are selected that lead to maximal information gain.  The UCB score of a pair is defined by:
\begin{equation}
\label{"eq:UCB_esti"}
    h(x_t,y_t)=\bar{r}_{x_t}-\bar{r}_{y_t}+\gamma\left\| e_{x_t}-e_{y_t}\right\|_{V_{t}^{-1}},
\end{equation} 
with the balance parameter $\gamma$. The specific $V_t^{-1}$ norm $\gamma\left\| e_{x_t}-e_{y_t}\right\|_{V_{t}^{-1}}$ measures the uncertainty between two arms. The UCB estimator balances the exploitation and exploration through combining ratings estimation $\bar{r}$ and the uncertainty term.  

At each round $t$, we obtain a set of optimal player candidates $\mathcal{S}$ with positive UCB scores:
\begin{equation}
    \label{"eq:select_candidate"}
    \mathcal{S}=\{x|h(x,y)>0,\ \forall\ y \in[n]/\{x\}\}.
\end{equation}
From the candidate set $\mathcal{S}$, we then pull a pair of arms with highest uncertainty by 
\begin{align}\label{'eq:select_pair_largest_uncer'}
(x_t,y_t)=\argmax_{(x,y)\ \in\mathcal{S}\times\mathcal{S}}\left\|e_x-e_y\right\|_{V_{t}^{-1}}
\end{align} 
to induce sufficient exploration. A detailed algorithm of our $\ourmethod$ is shown in \alg \ref{alg:dueling-sgd-UCB}.
 


 
Compared to \cite{saha2020regret} which uses MLE at each iteration, $\ourmethod$ uses SGD to update the Elo rating $r$ as traditional Elo does. Thus, our method is more efficient in both computation and time, and simple to implement. {Because the objective function $l_{j,\tau}$ is the aggregation of a batch with $\tau$ steps for the $\alpha$-strong convexity, the online SGD on this objective function is actually the mini-batch gradient descent on $\tau$ single steps.} The mini-batch update provides diversity and makes our algorithm more smooth and steady. {Thus, the selection of the batch size $\tau$ is important to achieve the fast convergence speed and smoothness during learning.} 
While RG-UCB randomly selects a pair to evaluate, our $\ourmethod$ selects the maximum informative pair and trade-off exploration and exploitation. 
To our best knowledge, this is the first algorithm that allows stochastic gradient descent update in dueling bandits settings. 
{See Table\ref{tab:complexity} for a comparison on time and memory. }

\subsection{MaxIn-mElo Algorithm}

To enable the rating system to handle the intransitive skills, we extend the  online sampling algorithm to multidimensional Elo ratings (mElo) \cite{balduzzi2018re}. 
Baking in the Hodge decomposition theory \citep{jiang2011statistical}, 
mElo proposed to decompose the antisymmetric logits matrix of win-loss probabilities into a transitive component, i.e. gradient flow of rating vector, and a cyclic component to capture the intransitive relations. 
By learning a $2k$-dimensional vector $c_x$ and a rating $r_x$ per player, the win-loss prediction for mElo$_{2k}$ is defined as:
\begin{equation}\label{"eq:melo_qhat"}
\hat{p}_{xy}=\sigma\left(r_{x}-r_{y}+{c}_{x}^{\top} \cdot {\Omega}_{2 k \times 2 k} \cdot {c}_{y}\right).
\end{equation}
where ${\Omega}_{2 k \times 2 k}=\sum_{i=1}^{k}({e}_{2 i-1} {e}_{2 i}^{\top}-{e}_{2 i} {e}_{2 i-1}^{\top})$.

The UCB estimate of a pair $(x_t,y_t)$ for mElo then becomes:
\begin{equation}\label{"eq:UCB_esti-melo"}
    h(x_t,y_t)=\bar{r}_{x_t}-\bar{r}_{y_t}+\bar{c}_x^T \Omega \bar{c}_y + \gamma\left\| e_{x_t}-e_{y_t}\right\|_{V_{t}^{-1}}.
\end{equation} 
Notice that compared to Elo ratings with $k=0$ (\eq \eqref{"eq:elo_hat"}), mElo ratings assign a feature vector per player to  approximated intransitive interactions.
We present the details for the mElo ratings and Algorithm 2 for $\ourmethodii$ in Appendix. 


\begin{table}[t!]
\centering
\caption{
Comparison of regret, time complexity and memory  with other algorithms. 
Our MaxIn-Elo and the MaxInP achieve the lowest regret bound $\tilde{O}(\sqrt{T})$, but our MaxIn-Elo has lower time and memory complexity than the MaxInP.
}
\begin{tabular}{cccc}
\hline Algorithms & Regret & Time Complexity & Memory\\
\hline 
DBGD & $O(T^{2/3})$ & $O(T)$ & $O(n)$ \\
RG-UCB & No & $O(T)$ & $O(n)$  \\
Random & No & $O(T)$ & $O(n)$ \\
MaxInP & $\tilde{O}(\sqrt{T})$ & $O(nT^2+n^2T)$ & $O(nT)$ \\
\textbf{MaxIn-Elo} & $\tilde{O}(\sqrt{T})$ & $O(n^2T)$ & $O(n^2)$ \\
\hline
\end{tabular}
\label{tab:complexity}
\end{table}

\section{Regret analysis}\label{sec:regret analysis}

We give the cumulative regret bound of $\ourmethod$, as for as we know, this is the first {work} that combines the online {gradient update} with dueling bandits and gives the  cumulative regret of dueling bandits while being  updated with SGD.  
We make a mild assumption on the link function $\sigma$.


\begin{myasum}
\label{"ass1:"}
Define $c_\eta=\inf_{\left\{\left\| r-r^{*}\right\| \leq \eta \right\}} \sigma^{\prime}\left(r_x-r_y\right)$, where $(x,y)\in [n]\times[n]$, and we assume $c_3>0$.
\end{myasum}
This assumption is similar to that in  \citep{ding2021efficient}. Our main results rely on the following concentration events and the proofs of which are deferred to Appendix.
\begin{restatable}{lem}{lemmai}
\label{"lm:mylemma1"}
Suppose we sample a sequence of arm pairs $\{(x_1,y_1), (x_2,y_2),\dots,(x_t,y_t)\}$ through \alg \ref{alg:dueling-sgd-UCB} up to round t, and assume the selected batch size $\tau$  satisfy that $\lambda_{\min }(V_{\tau+1}) \geq 1$, where $\lambda_{\min}(V_{\tau+1})$ means the minimum eigenvalue of the matrix $(V_{\tau+1})$, Then $\forall t>0$, 
$$
\sum_{i=\tau+1}^{\tau+t}\left\|\left(e_{x_i}-e_{y_i}\right)\right\|_{V_{i}^{-1}} < \sqrt{2 n t \log \left(\frac{2\tau+t}{n}\right)}.
$$
\end{restatable}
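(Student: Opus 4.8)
The plan is to prove this as a standard elliptical-potential (self-normalized) bound of the kind used in linear and contextual bandits, adapted to the rank-one updates $V_{i+1}=V_i+(e_{x_i}-e_{y_i})(e_{x_i}-e_{y_i})^\top$. Writing $a_i=e_{x_i}-e_{y_i}$, the first step is Cauchy--Schwarz to pass from the sum of $V_i^{-1}$-norms to a sum of squared norms:
\[ \sum_{i=\tau+1}^{\tau+t}\|a_i\|_{V_i^{-1}}\le\sqrt{t\sum_{i=\tau+1}^{\tau+t}\|a_i\|_{V_i^{-1}}^2}, \]
so it suffices to show $\sum_{i=\tau+1}^{\tau+t}\|a_i\|_{V_i^{-1}}^2\le 2n\log\big(\frac{2\tau+t}{n}\big)$.

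Next I would establish a pointwise bound and convert the squared norms into a log-determinant increment. Each update adds a positive-semidefinite rank-one term, so the matrices are monotone, $V_i\succeq V_{\tau+1}$ for $i\ge\tau+1$; together with the hypothesis $\lambda_{\min}(V_{\tau+1})\ge1$ this gives $V_i^{-1}\preceq I$ and hence $\|a_i\|_{V_i^{-1}}^2\le\|a_i\|^2=2$. On $[0,2]$ one has $z\le2\log(1+z)$, so each squared norm is at most $2\log(1+\|a_i\|_{V_i^{-1}}^2)$. The matrix-determinant lemma gives $\det V_{i+1}=\det V_i\,(1+\|a_i\|_{V_i^{-1}}^2)$, and summing telescopes:
\[ \sum_{i=\tau+1}^{\tau+t}\|a_i\|_{V_i^{-1}}^2\le2\sum_{i=\tau+1}^{\tau+t}\log\big(1+\|a_i\|_{V_i^{-1}}^2\big)=2\log\frac{\det V_{\tau+t+1}}{\det V_{\tau+1}}. \]

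To finish, I would control the determinant ratio by a trace. The hypothesis $\lambda_{\min}(V_{\tau+1})\ge1$ gives $\det V_{\tau+1}\ge1$, removing the denominator, and AM--GM yields $\det V_{\tau+t+1}\le(\operatorname{tr}(V_{\tau+t+1})/n)^n$. Every rank-one term contributes $\|a_i\|^2=2$ to the trace, so $\operatorname{tr}(V_{\tau+t+1})=2(\tau+t)$; substituting gives $\sum\|a_i\|_{V_i^{-1}}^2\le2n\log(2(\tau+t)/n)$, which after the Cauchy--Schwarz step produces the claimed $\sqrt{2nt\log(\cdot)}$ form.

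The step requiring the most care is that the difference vectors $a_i=e_{x_i}-e_{y_i}$ all lie in the hyperplane orthogonal to $\mathbf1$, so $V_i$ is genuinely rank-deficient and $V_i^{-1}$ must be read as the pseudo-inverse on the $(n-1)$-dimensional non-null subspace; the determinant, trace and AM--GM manipulations then have to be carried out there (pseudo-determinant and effective dimension), which is also exactly where $\lambda_{\min}(V_{\tau+1})\ge1$ is the natural and necessary hypothesis. Reconciling that subspace dimension with the explicit $n$, and the trace $2(\tau+t)$ with the slightly tighter $\frac{2\tau+t}{n}$ appearing inside the logarithm, is the only delicate accounting; everything else is the textbook potential argument.
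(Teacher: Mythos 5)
Your proposal is correct and follows essentially the same route as the paper's proof: Cauchy--Schwarz, the log-determinant (elliptical potential) telescoping, and an AM--GM trace bound, which the paper obtains by citing Lemma 2 of \citet{li2017provably} rather than deriving it from the matrix-determinant lemma as you do. The two caveats you flag are genuine issues in the paper as well: since each $\|e_{x_i}-e_{y_i}\|^2=2$, the honest trace accounting gives $2(\tau+t)$ rather than the stated $2\tau+t$ inside the logarithm (harmless for the $\tilde{O}(\sqrt{T})$ rate), and the rank deficiency of $V_t$ on $\{\mathbf{1}\}^{\perp}$, which makes the hypothesis $\lambda_{\min}(V_{\tau+1})\geq 1$ literally unattainable without restricting to that subspace, is likewise glossed over in the paper's argument.
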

{{Lemma} \ref{"lm:mylemma1"} gives the bound of the sum of selected pair's uncertainty from round $\tau+1$ to $t$.}
And this lemma will be adopted to derive the cumulative regret bound. 
In the following Lemma \ref{"lm:mylemma2"}, we show that when the batch size $\tau$ is chosen as
\eq \eqref{"eq:tau"},  we have the concentration property of the {averaged} SGD estimator $\bar{r}$.
\begin{restatable}{lem}{lemmaii}
\label{"lm:mylemma2"}
{
Assume that there exists a positive constant $\lambda_f$ such that $\lambda_{\min}\left(\mathbb{E}[(e_{x_t}-e_{y_t})(e_{x_t}-e_{y_t})^T]\right)\geq \lambda_f$ holds at each round $t>\tau$, where $(x_t,y_t)$ is sampled through \alg \ref{alg:dueling-sgd-UCB}. } 
Let the batch size $\tau$ satisfies
\begin{equation}\label{"eq:tau"}
\begin{aligned}
\tau_{1} &=2\left(\frac{C_{1} \sqrt{n}+C_{2} \sqrt{2 \log T}}{\lambda_{\min}(B)}\right)^2+\frac{16(n+2 \log T)}{c_{1}^{2} \lambda_{\min}(B)}, \\
\tau_{2}& =2\left(\frac{C_{1} \sqrt{n}+C_{2} \sqrt{2 \log T}}{\lambda_f}\right)^{2}+\frac{4 \alpha}{c_{3} \lambda_f}, \\
\tau &=\left\lceil\max \left\{\tau_{1}, \tau_{2}\right\}\right\rceil,
\end{aligned}
\end{equation}
where $B=\mathbb{E}_{(x,y)\overset{iid}{\sim}[n]\times[n]}\left[(e_x-e_y)(e_x-e_y)^T\right]$.
Define $g_{1}(t)$ and $g_{2}(j)$, 
\begin{align}\label{eq:g1g2}
    g_{1}(t) &=\frac{1}{2c_{1}} \sqrt{\frac{n}{2} \log \left(1+\frac{2 t}{n}\right)+2 \log T}, \\
    g_{2}(j) &=\frac{\tau}{\alpha} \sqrt{1+\log j}.
\end{align}
For a constant $\alpha \geq c_3$, there exists two positive constants $C_1$, $C_2$ such that if the batch size $\tau$ is chosen as \eq\eqref{"eq:tau"},
then we have that at each round $t>\tau$ corresponding to batch $j=\lfloor\frac{t-1}{\tau}\rfloor$, event $E_1(t)$ holds with probability at least $1-\frac{5}{T^2}$, where $E_{1}(t)=\{\forall (x,y):\left|( e_x- e_y)^T\left(\bar{r}_{j}-r^{*}\right)\right| \leq g_{1}(j\tau)\|e_x-e_y\|_{V_{j\tau+1}^{-1}}+g_{2}(j) \frac{\sqrt{2}}{\sqrt{j}}\}$.
\end{restatable}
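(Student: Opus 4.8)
The plan is to control the directional error $(e_x-e_y)^\top(\bar r_j-r^*)$ by splitting it, through the maximum-likelihood estimate $\hat r_{j\tau}$ computed on all $j\tau$ observations, into a \emph{statistical} part $(e_x-e_y)^\top(\hat r_{j\tau}-r^*)$ and an \emph{optimization} part $(e_x-e_y)^\top(\bar r_j-\hat r_{j\tau})$; the first will be matched to $g_1(j\tau)\|e_x-e_y\|_{V_{j\tau+1}^{-1}}$ and the second to $g_2(j)\sqrt2/\sqrt j$. Before that, I would establish the geometric precondition that makes everything work: because the initial $\tau$ pairs are drawn i.i.d.\ uniformly, their information matrix concentrates around $\tau B$, so with $\tau\ge\tau_1$ a matrix-concentration bound plus the standard generalized-linear MLE analysis gives $\|\hat r_\tau-r^*\|\le 1$ with probability $1-O(1/T^2)$. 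This places $\mathcal C$ inside $\mathcal B$, so every projected iterate $\tilde r_q=\prod_{\mathcal C}(\cdot)$ lives in $\mathcal B$, where Assumption~\ref{"ass1:"} gives $\sigma'\ge c_3>0$ and hence the curvature required for strong convexity.

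For the statistical part I would invoke the self-normalized concentration machinery for generalized linear models. Writing the MLE optimality condition and applying the mean-value theorem to $\sigma$ on $\mathcal B$, the error $\hat r_{j\tau}-r^*$ is controlled in the $V_{j\tau+1}$-norm by a sum of bounded martingale differences $\sum (o_t-p_{x_t y_t})(e_{x_t}-e_{y_t})$. A Freedman / self-normalized tail bound—with the $2\log T$ term inside $g_1$ absorbing the $1/T^2$ failure probability, and the factor $1/(2c_1)$ coming from the lower bound $\sigma'\ge c_1$ on the radius-$1$ ball—then yields $|(e_x-e_y)^\top(\hat r_{j\tau}-r^*)|\le g_1(j\tau)\|e_x-e_y\|_{V_{j\tau+1}^{-1}}$ uniformly over all pairs.

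For the optimization part, Cauchy--Schwarz and $\|e_x-e_y\|=\sqrt2$ reduce the task to bounding $\|\bar r_j-\hat r_{j\tau}\|$, which I would do via a high-probability convergence analysis of the averaged, projected SGD on the time-varying batch objectives $l_{j,\tau}$. Here $\tau\ge\tau_2$ is used twice: the matrix-concentration part guarantees each batch's information matrix has $\lambda_{\min}=\Omega(\tau\lambda_f)$, while the term $4\alpha/(c_3\lambda_f)$ guarantees each $l_{j,\tau}$ is genuinely $\alpha$-strongly convex on $\mathcal B$. Since each $l_{j,\tau}$ is a \emph{sum} of $\tau$ single-step losses, its gradient norm is $O(\tau)$, which—combined with steps $\eta_j=1/(\alpha j)$ and the non-expansiveness of $\prod_{\mathcal C}$—produces the standard strongly-convex recursion whose solution gives $\|\bar r_j-r^*\|=O\!\big(\tfrac{\tau}{\alpha}\sqrt{(1+\log j)/j}\big)$, i.e.\ the $g_2(j)/\sqrt j$ rate; the $\sqrt{\log j}$ factor and the extra $1/T^2$ failure probability arise from an Azuma/Freedman bound on the accumulated stochastic-gradient noise. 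A final union bound over the (constantly many) failure events delivers probability at least $1-5/T^2$.

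The main obstacle I anticipate is this last high-probability SGD bound: the iterates are generated by a \emph{sequence of different, data-dependent} objectives rather than one fixed function, strong convexity is available only \emph{conditionally} on the iterates staying in $\mathcal B$ (itself a high-probability event that must be threaded through the induction), and one must upgrade the usual in-expectation $O(1/j)$ rate to a uniform high-probability statement while tracking how the mini-batch size $\tau$ inflates the gradient scale. Coordinating these dependencies—so that the strong-convexity, curvature-concentration, and martingale-concentration events hold jointly with total failure probability $O(1/T^2)$—is the delicate part; by contrast, the generalized-linear confidence bound is essentially standard once $\mathcal C\subseteq\mathcal B$ is secured.
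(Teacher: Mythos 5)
Your proposal is correct and follows essentially the same route as the paper: the same decomposition through the batch MLE $\hat r_{j\tau}$ into a self-normalized GLM confidence bound (giving the $g_1$ term) plus an averaged-projected-SGD convergence bound (giving the $g_2$ term), with the same use of $\tau_1$ to secure $\mathcal C\subseteq\mathcal B$ and of $\tau_2$ to secure $\alpha$-strong convexity of each $l_{j,\tau}$. The only difference is that the paper outsources the two concentration steps you describe proving from scratch to Lemma~3 of Li et al.\ (2017) and Lemmas~1--2 of Ding et al.\ (2021); also note the bound you want from the SGD analysis is on $\|\bar r_j-\hat r_{j\tau}\|$, not $\|\bar r_j-r^*\|$, as you correctly state earlier in the same paragraph.
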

{The following Lemma \ref{"mylemma3"} shows how to select a suitable balanced parameter $\gamma$ of UCB score that ensures the best player is always in the candidate set.}


\begin{restatable}{lem}{lemmaiii}
\label{"mylemma3"}
Define the constant $C=\sqrt{2nT\log \left(\frac{T+\tau}{n}\right)}$. At each round $t>\tau$, let UCB balanced parameter $\gamma=2g_1(t)$ and assume $\Delta>g_1(T)C$, if $\alpha$ satisfies that $\alpha\geq \frac{\sqrt{2}\tau\sqrt{1+\log j}}{(\Delta-g_1(T)C)\sqrt{j}}$, then we have $x^*\in \mathcal{S}$ holds with probability at least $1-\frac{5}{T^2}$, where $j=\lfloor\frac{t-1}{\tau}\rfloor$, $\Delta$ is the difference between ratings of optimal player $x^*$ and sub-optimal player $x^\prime$. Recall $x^*=\argmax_{x\in[n]} r^*_x$, and define $x^\prime=\argmax_{x\in[n]/{x^*}}r^*_x$, $\Delta= r^*_{x^*}- r^*_{x^\prime}$.
\end{restatable}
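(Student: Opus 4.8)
The plan is to verify directly that the defining membership condition of $\mathcal{S}$ holds for $x=x^*$, i.e. that $h(x^*,y)=\bar r_{x^*}-\bar r_y+\gamma\|e_{x^*}-e_y\|_{V_t^{-1}}>0$ for every $y\in[n]/\{x^*\}$. I would carry out the whole argument on the high-probability event $E_1(t)$ furnished by Lemma \ref{"lm:mylemma2"}: that event already holds with probability at least $1-\frac{5}{T^2}$, which is exactly the probability claimed here, so conditioning on it makes the remainder deterministic.

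Fix any $y\neq x^*$ and instantiate $E_1(t)$ at the pair $(x^*,y)$ (writing $\bar r$ for the current batch average $\bar r_j$). Since $(e_{x^*}-e_y)^\top\bar r=\bar r_{x^*}-\bar r_y$ and $(e_{x^*}-e_y)^\top r^*=r^*_{x^*}-r^*_y$, the lower-side consequence of the absolute-value bound in $E_1(t)$ gives
\begin{equation*}
\bar r_{x^*}-\bar r_y \;\ge\; (r^*_{x^*}-r^*_y)-g_1(j\tau)\,\|e_{x^*}-e_y\|_{V_{j\tau+1}^{-1}}-g_2(j)\tfrac{\sqrt 2}{\sqrt j}.
\end{equation*}
By the definitions of $x^*$ and $x'$, every $y\neq x^*$ satisfies $r^*_y\le r^*_{x'}$, hence $r^*_{x^*}-r^*_y\ge\Delta$. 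For the uncertainty term I would use monotonicity of $g_1$ (so $g_1(j\tau)\le g_1(T)$, since $j\tau\le t-1<T$) together with the uniform bound $\|e_{x^*}-e_y\|_{V_{j\tau+1}^{-1}}\le\sqrt 2\le C$, which follows from $\lambda_{\min}(V_{\tau+1})\ge 1$ and $V_{j\tau+1}\succeq V_{\tau+1}$; this yields $g_1(j\tau)\|e_{x^*}-e_y\|_{V_{j\tau+1}^{-1}}\le g_1(T)\,C$. Finally, after substituting $g_2(j)=\frac{\tau}{\alpha}\sqrt{1+\log j}$, the stated condition on $\alpha$ is exactly equivalent to $g_2(j)\frac{\sqrt 2}{\sqrt j}\le\Delta-g_1(T)C$, which is meaningful precisely because the hypothesis $\Delta>g_1(T)C$ makes the right-hand side positive. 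Chaining the three bounds gives $\bar r_{x^*}-\bar r_y\ge\Delta-g_1(T)C-(\Delta-g_1(T)C)=0$, whence $h(x^*,y)\ge\gamma\|e_{x^*}-e_y\|_{V_t^{-1}}>0$ since $\gamma=2g_1(t)>0$ and $e_{x^*}\neq e_y$. As $y$ was arbitrary, $x^*\in\mathcal S$.

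The main obstacle, and the point where this departs from an MLE-based UCB proof, is that the concentration term and the exploration bonus live in different norms — $\|\cdot\|_{V_{j\tau+1}^{-1}}$ in $E_1(t)$ versus $\|\cdot\|_{V_t^{-1}}$ in $h$ — and because $V_t\succeq V_{j\tau+1}$ the bonus is the \emph{smaller} of the two, so one cannot cancel the concentration error against $\gamma\|e_{x^*}-e_y\|_{V_t^{-1}}$ in the usual single-matrix way. The resolution is to avoid that cancellation entirely: absorb the whole $g_1$-uncertainty into the gap through the coarse uniform bound $g_1(T)C$, absorb the SGD bias term $g_2(j)\frac{\sqrt 2}{\sqrt j}$ separately through the learning-rate lower bound on $\alpha$, and reserve the bonus only to upgrade the resulting ``$\ge 0$'' to a strict inequality. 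Controlling that extra bias term $g_2$ — which has no analogue in the MLE setting of \citet{saha2020regret} and is exactly what forces the explicit threshold on $\alpha$ — is the genuinely new ingredient, and checking that the stated $\alpha$ bound makes this term fit inside $\Delta-g_1(T)C$ is the crux of the computation.
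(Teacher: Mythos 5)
Your proof is correct and follows essentially the same route as the paper's: both instantiate the concentration event $E_1(t)$ of Lemma \ref{"lm:mylemma2"} at the pair $(x^*,y)$, lower-bound $r^*_{x^*}-r^*_y$ by $\Delta$, absorb the $g_1$-term via the coarse uniform bound $g_1(T)C$, and absorb the $g_2$-term via the stated lower bound on $\alpha$. If anything your version is slightly cleaner: the paper pads the triangle inequality with a redundant $2g_1(t)\|e_{x^*}-e_y\|_{V_t^{-1}}$ so as to cancel it against the bonus (reaching only $h(x^*,y)\ge 0$) and justifies the coarse norm bound by invoking Lemma \ref{"lm:mylemma1"} on a pair that need not have been selected, whereas your $\lambda_{\min}(V_{j\tau+1})\ge 1$ argument is the more defensible justification and yields the strict inequality $h(x^*,y)>0$ actually required for membership in $\mathcal{S}$.
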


{Lemma} \ref{"mylemma3"} shows that if we properly select UCB balanced parameter $\gamma$ and parameter $\alpha$ which describes objective function $l_{j,\tau}$ as a $\alpha$-strongly convex, then it is promised that the best player $x^*$ is in candidates set $\mathcal{S}$ with high probability.  
{This property is helpful for the top-$1$ identification because the candidate set $\mathcal{S}$ will become tighter with the time, and $x^*$ always in $\mathcal{S}$, thus candidate set $\mathcal{S}$ only contains  $x^*$ eventually. }
Together we are ready to present our main results in Theorem \ref{"thm:Elo-SGD-UCB"}.
\begin{mythm}\label{"thm:Elo-SGD-UCB"}
We run our \alg \ref{alg:dueling-sgd-UCB} to get a sequence of arm-pair, and let the learning rate parameter $\alpha\geq \max\{c_3, \frac{\sqrt{2}\tau\sqrt{1+\log j}}{(\Delta-g_1(T)C)\sqrt{j}}\}$ with assumption that $\Delta>g_1(T)C$, the balanced parameter $\gamma=2g_1(t)$, there exists two positive parameter $C_1,C_2$ such that if the batch size $\tau$ is chosen as \eq \eqref{"eq:tau"},
then we have the cumulative regret satisfies that:
\begin{equation*}
	 \resizebox{1\columnwidth}{!}{
	 $
	\begin{aligned}
    R(T)\leq \tau*\Delta_{max}+(2+\tau) g_1(T) \sqrt{2nT\log(\frac{2\tau+T}{n})}+4g_2(J)\sqrt{\tau T},
    \end{aligned}
    $}
\end{equation*}
with probability at least $1-\frac{10}{T}$, where $J=\lfloor\frac{T}{\tau}\rfloor$, $\Delta_{max}=\max_i r^*_i-\min_i r^*_i$, $g_1(T), g_2(J)$ is defined in \eq \eqref{eq:g1g2} and $C$ is a constant defined as $C=\sqrt{2nT\log \left(\frac{T+\tau}{n}\right)}$.  
\end{mythm}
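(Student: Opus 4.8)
The plan is to split the horizon into the initialization phase $t\le\tau$ and the adaptive phase $t>\tau$. The first $\tau$ rounds use uniformly random pairs, so each contributes at most $\Delta_{max}=\max_i r^*_i-\min_i r^*_i$ to the regret, yielding the leading $\tau\Delta_{max}$ term; the whole difficulty is in the adaptive phase. For $t>\tau$ I would write the instantaneous regret as $r^*_{x^*}-\tfrac12(r^*_{x_t}+r^*_{y_t})=\tfrac12(r^*_{x^*}-r^*_{x_t})+\tfrac12(r^*_{x^*}-r^*_{y_t})$ and bound each half separately. By Lemma~\ref{"mylemma3"}, on its good event $x^*\in\mathcal S$, so both selected arms $x_t,y_t$ and $x^*$ lie in $\mathcal S$; since the algorithm picks $(x_t,y_t)$ as the pair in $\mathcal S\times\mathcal S$ of largest $\|e_x-e_y\|_{V_t^{-1}}$, I immediately obtain $\|e_{x^*}-e_{x_t}\|_{V_t^{-1}}\le\|e_{x_t}-e_{y_t}\|_{V_t^{-1}}$ and likewise for $y_t$. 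This is the step that converts ``distance to the optimum'' into the single quantity controlled by Lemma~\ref{"lm:mylemma1"}.

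Next I would bound $r^*_{x^*}-r^*_x$ for an arbitrary $x\in\mathcal S$. The membership condition of $\mathcal S$ with $y=x^*$ gives the optimistic inequality $\bar r_{x^*}-\bar r_x<\gamma\|e_x-e_{x^*}\|_{V_t^{-1}}$, while the concentration event $E_1(t)$ of Lemma~\ref{"lm:mylemma2"} controls the estimation error $(e_{x^*}-e_x)^\top(r^*-\bar r_j)$ by $g_1(j\tau)\|e_{x^*}-e_x\|_{V_{j\tau+1}^{-1}}+g_2(j)\tfrac{\sqrt2}{\sqrt j}$. Adding these with $\gamma=2g_1(t)$, the regret of $x$ is at most $2g_1(t)\|e_x-e_{x^*}\|_{V_t^{-1}}+g_1(j\tau)\|e_x-e_{x^*}\|_{V_{j\tau+1}^{-1}}+g_2(j)\tfrac{\sqrt2}{\sqrt j}$. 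The two norms use different matrices, so I would convert the batch-start norm into the current-round norm: within batch $j$ at most $\tau$ rank-one terms of squared length $2$ are added to $V_{j\tau+1}$, and since $\lambda_{\min}(V_{j\tau+1})\ge1$ (monotonicity of $V_t$ plus the assumption $\lambda_{\min}(V_{\tau+1})\ge1$) one gets $V_t\preceq(1+2\tau)V_{j\tau+1}$, hence $\|\cdot\|_{V_{j\tau+1}^{-1}}\le\sqrt{1+2\tau}\,\|\cdot\|_{V_t^{-1}}\le\tau\|\cdot\|_{V_t^{-1}}$ for $\tau\ge3$. Combined with $g_1(j\tau),g_1(t)\le g_1(T)$ and the selection inequality above, each half-regret is at most $(2+\tau)g_1(T)\|e_{x_t}-e_{y_t}\|_{V_t^{-1}}+g_2(j)\tfrac{\sqrt2}{\sqrt j}$, and the full per-round regret inherits the same bound.

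Finally I would sum over $t=\tau+1,\dots,T$. The uncertainty terms sum by Lemma~\ref{"lm:mylemma1"} to at most $\sqrt{2nT\log((2\tau+T)/n)}$, producing the $(2+\tau)g_1(T)\sqrt{2nT\log((2\tau+T)/n)}$ term. For the $g_2$ contributions I group rounds by batch: each batch $j\le J=\lfloor T/\tau\rfloor$ contains $\tau$ rounds sharing the same value, so the sum is at most $\sqrt2\,\tau\,g_2(J)\sum_{j\le J}j^{-1/2}\le2\sqrt2\,\tau\,g_2(J)\sqrt J=2\sqrt2\,g_2(J)\sqrt{\tau T}\le4g_2(J)\sqrt{\tau T}$, using monotonicity of $g_2$ and $\sum_{j\le J}j^{-1/2}\le2\sqrt J$. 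A union bound over the $T$ rounds of the failure probabilities $5/T^2$ from Lemma~\ref{"lm:mylemma2"} and $5/T^2$ from Lemma~\ref{"mylemma3"} yields the stated confidence $1-10/T$. The step I expect to be the main obstacle is the norm conversion: reconciling the matrix $V_{j\tau+1}$ frozen for the batch estimator $\bar r_j$ with the per-round matrix $V_t$ used both in the selection rule and in Lemma~\ref{"lm:mylemma1"}, while keeping the batch index $j=\lfloor(t-1)/\tau\rfloor$ consistent, is precisely where the $\tau$ factor (hence the $(2+\tau)$ coefficient) originates and where a careless bound would either inflate the regret or misalign the two confidence widths.
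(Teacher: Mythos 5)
Your proposal is correct and follows essentially the same route as the paper's proof: the same split into initialization and adaptive phases, the same per-round decomposition into two halves bounded via the $\mathcal{S}$-membership inequality at $y=x^*$, the concentration event of Lemma~\ref{"lm:mylemma2"}, the guarantee $x^*\in\mathcal{S}$ from Lemma~\ref{"mylemma3"} combined with the max-uncertainty selection rule, then summation via Lemma~\ref{"lm:mylemma1"} and $\sum_{j\le J}j^{-1/2}\le 2\sqrt{J}$, with the same $1-10/T$ union bound. The only divergence is your handling of the $V_{j\tau+1}$-versus-$V_t$ norm mismatch via the explicit Loewner comparison $V_t\preceq(1+2\tau)V_{j\tau+1}$, which is a cleaner and more rigorous derivation of the factor $\tau$ than the paper's informal ``$\tau$ exploration tracks'' counting, and it also applies the selection inequality only after converting to the $V_t^{-1}$ norm, which is the correct order.
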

Note that $\tau\sim O(\max\{n,\log T\})$ (\eq \eqref{"eq:tau"}), $g_{1}(T) \sim O(\sqrt{n\log T})$, $g_{2}(J) \sim O(\sqrt{\log T})$. Combining the above analysis, we have $R(T)\sim O(n\log T \sqrt{T})$ (or $\tilde{O}(\sqrt{T})$.
This regret upper bound is equivalent to that in \cite{saha2020regret} which employs MLE estimators.
However, our algorithm improves the efficiency in terms of memory and time. The memory cost is constant with respect to $T$ while MaxInP's memory cost is linear in the time horizon $T$.
{The time complexity of our MaxIn-Elo is $O(n^2T)$, while MaxInP's time complexity is $O(nT^2+n^2T)$.}
 {See Table \ref{tab:complexity} for a detailed comparison.}
Detailed proofs are referred to Appendix.



\section{Experiments}\label{sec:experiments}


We consider the following two batteries of experiments to evaluate the performance of our algorithms in the scenarios of transitive and intransitive real world meta-games. Ablation studies of parameter $\gamma$, dimension of mElo and the batch size $\tau$ can be found in Appendix. 

\subsection{Baselines}
\textbf{Random}: The pairwise matching scheme of the classical Round-robin \citep{rasmussen2008round} tournament is based on random sampling. We construct  a simple baseline that randomly select a pair from all ${n*(n-1)}/{2}$ pairs with replacement. After sampling a pair, we use the Elo/mElo model to update the ratings. 

\textbf{RG-UCB} \citep{rowland2019multiagent}: This algorithm adopts a pure exploration sampling scheme, which uniformly samples a pair from the set containing pairs that need to be estimated. And the stopping condition $C(\delta)$ controls the total number of comparisons of each pair, where $\delta$ is a hyper parameter deciding the confidence level of estimated competitive results.

\textbf{DBGD} \citep{yue2009interactively}: This dueling bandits algorithm is popular in ranking tasks when only pair-wise binary feedback is available. 
It maintains one winning arm at each round, and randomly synthesizes a gradient to obtain the opponent arm in the contextual bandit setting. 
In our feature free setting, this is equivalent to randomly selecting a player as the opponent. 

\textbf{$\alpha$-IG} \citep{rashid2021estimating}: This is an active sampling algorithm used for estimating the $\alpha$-rank \citep{omidshafiei2019alpha}. This algorithm  selects a pair with largest information gain at each round. In the transitive case, the top player has an $\alpha$-rank score  equal to $1$. Due to the high computation cost at each round ( computing $\alpha$-rank for 80000 times in a $4\times4$ game), we only compare with it in a $4 \times 4$ transitive game: the `2 Good, 2Bad' game given by $\alpha$-IG.

\textbf{MaxInP} \citep{saha2020regret}: This algorithm is for the generalized linear contextual dueling bandits problem, in which arms are represented as feature vectors. 
And it uses MLE to estimate model parameters $\theta$ relying on all historical pairwise comparisons at each round $t$. This algorithm calculates a candidate set containing advanced arms and pulls an arm-pair with the largest uncertainty. In order to fit their model, each player is described as a one-hot vector, and the estimated parameters $\theta$ correspond to players' ratings in our setting.

\textbf{MaxIn-Elo}: Our first algorithm adopts dueling bandits to adaptively sample pairs for Elo rating update in \eq \eqref{"eq:elo_hat"}.
The aim of our MaxIn-Elo is to identify the advanced players gradually, and to minimize the cumulative regret described in \eq \eqref{"eq:cumu reg"}
simultaneously.
    
\textbf{MaxIn-mElo}: Our second algorithm tames the intransitive scenarios. Different to $\ourmethod$, there is an extra vector $ c$ to capture intransitive relationship in competition outcome prediction. {The dimension of $ c$ is set to 8 in experiments.}
For the MaxIn-mElo algorithm, we hope to identify players with superior mElo ratings and to minimize cumulative regret on mElo ratings.

\subsection{Experiments setting}
\paragraph{Real world games}
We do our experiments on twelve real-games released by  \citet{czarnecki2020real}, most of which are implemented on the OpenSpiel framework \citep{lanctot2019openspiel}.
The six  games used for evaluating Elo are Triangular game,  Transitive game, Elo game, and three noisy variants of Elo games. 
The first three are transitive games; the three variants of Elo game are Elo games with additive Gaussian noises.
The six intransitive games used for evaluating mElo are Kuhn-poker, AlphaStar, tic\_tac\_toe, hex, Blotto and 5,3-Blotto game.

The intransitivity of games can be revealed by sink strongly connected components (SSCCs) \citep{omidshafiei2019alpha}, which 
is a set of strategies that cannot be defeated by external strategies and all internal strategies become a circle, such as Rock, Paper, Scissors.
The statistics of these games is shown in \tables 2 in Appendix. 

\paragraph{Metrics}
Except the cumulative regret defined in \eq \eqref{"eq:cumu reg"}, 
we introduce three other metrics for   Reciprocal Rank (RR), Normalized Discounted Cumulative Gain (NDCG), and Hit Raio (HR). 
RR is used for the results on generating top-1 players.
NDCG and HR report discrete performance for top-1 performance and are thus used in top-$k$ results.


Reciprocal Rank (RR) \citep{lambdarank} give the reciprocal of predicted ranking of the best player $x^*$. Define
$$RR=\frac{1}{R(x^*)},$$ where $R(x)$ returns the ranking of player $x$ relying on currently predicted ratings $\bar{r}$. Larger RR corresponds to better performance on the top-1  player identification.

$\text{Hit Ratio}@K$ \citep{he2015trirank} is defined as the ratio of the predicted top-$k$ that belongs to the true top-$k$. Since hit ratio does not consider the positions of correct predictions, 
{we also adopts NDCG \citep{lambdarank} which assigns higher importance  to results at top ranks. }
{Normalized Discounted Cumulative Gain (NDCG) is widely used in the evaluation of rankings and the NDCG@k measures the importance of predicted top-$k$ players. It is given by 
$$\text{NDCG}@K=\frac{1}{N_K}\sum_{i=1}^K\frac{2^{l(d_i)}-1}{\log(i+1)},$$ 
{where $N_k$ is a normalizer to ensure that the perfect ranking would result in $\text{NDCG}@K=1.$} $d_i$ denote the index of predicted i-th player, and $l(x)\in\{0, 1\}$ is the relevance level about top-k identification, we set $l(x)=1$ if player $x$ in true top-k otherwise 0.}

\paragraph{Parameters setting}
For Random, DBGD, and RG-UCB baseline, we perform a grid search for the initial step size $\eta$ in the range $\{0.01, 0.05, 0.1, 0.5, 1, 5, 10\}$.
For RG-UCB, stopping confidence $\delta=0.2$. 
For MaxInP, we tune the UCB balanced parameter $\gamma \in \{0.2, 0.4, 0.6, ... , 2.0\}$. 
For $\ourmethod$ and $\ourmethodii$, we tune the initialized learning rate $\eta \in\{0.01, 0.05, 0.1, 0.5, 1, 5, 10\}$, and the learning rate at batch $j$ is set as $\frac{\eta}{j}$. And the UCB balanced parameter $\gamma \in \{0.2, 0.4, 0.6, ... , 2.0\}$.
The batch size $\tau$ of MaxInP, MaxIn-Elo and MaxIn-mElo is set to $0.7*n$. When baselines ues mElo model to calculate ratings, we set the dimension of the extra vector $c$ as 8. We use the parameters that report the best performance for $\alpha$-IG. 
We repeat experiments 5 times with different random seeds and plot the averaged performance with standard deviations. 

All experiments were run in a single x86\_64 GNU/Linux machine with 256 AMD EPYC 7742 64-Core Processor and 2 A100 PCIe 40GB GPU. We use sklearn(0.24.2) to solve the MLE. 

\subsection{Results}

\begin{figure}
    \centering
    \includegraphics[width=0.9\linewidth]{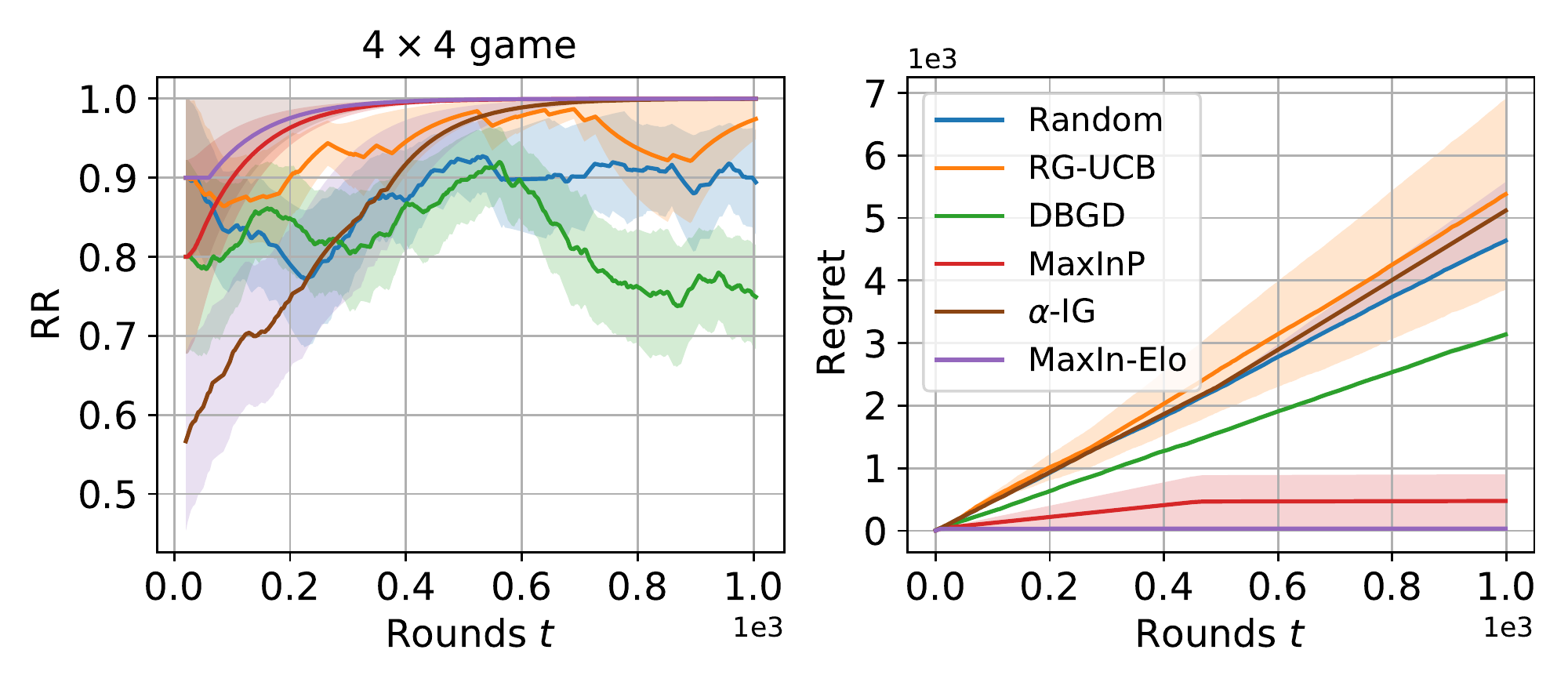}
    \caption{Results on $4\times 4$ game (2 Good 2 Bad).}
    \label{fig:alphaIG}
\end{figure}
\begin{figure}
    \centering
    \begin{subfigure}
        \centering
        \includegraphics[width=0.96\linewidth]{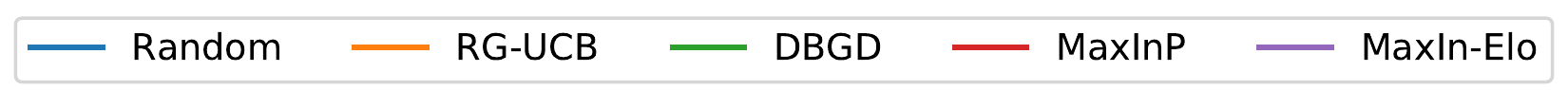}
    \end{subfigure}\\
    \begin{subfigure}
        \centering
        \includegraphics[width=0.46\linewidth]{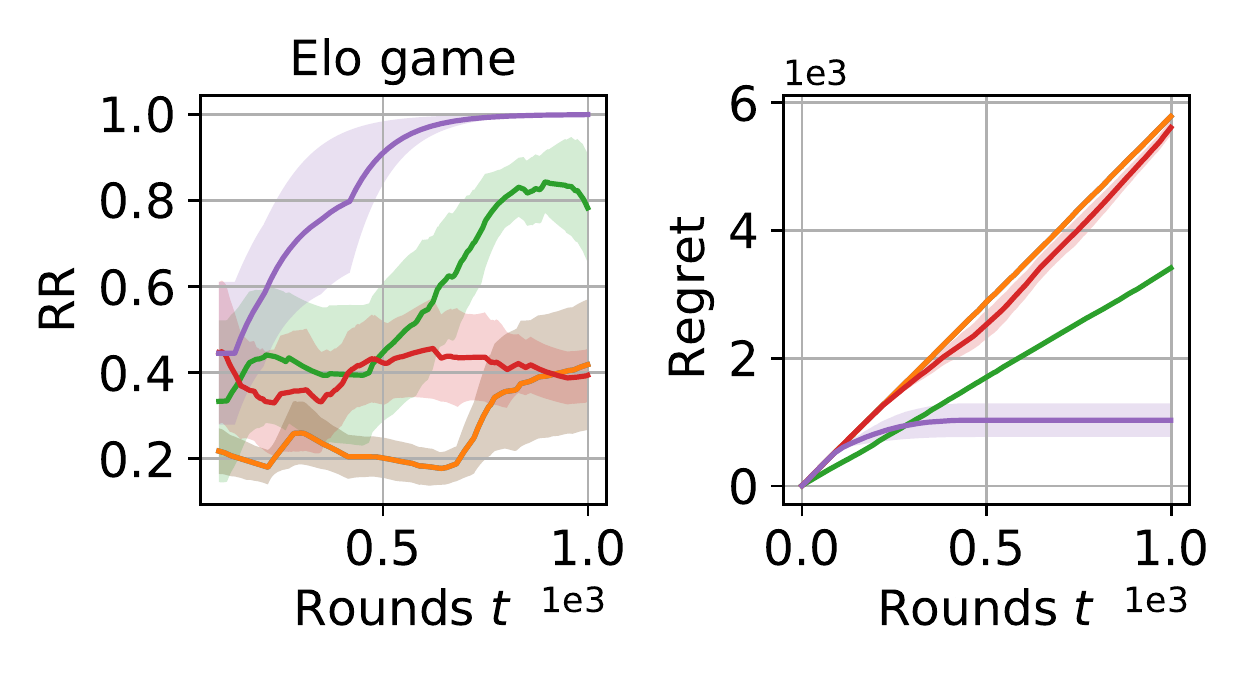}
    \end{subfigure}
    \begin{subfigure}
        \centering
        \includegraphics[width=0.46\linewidth]{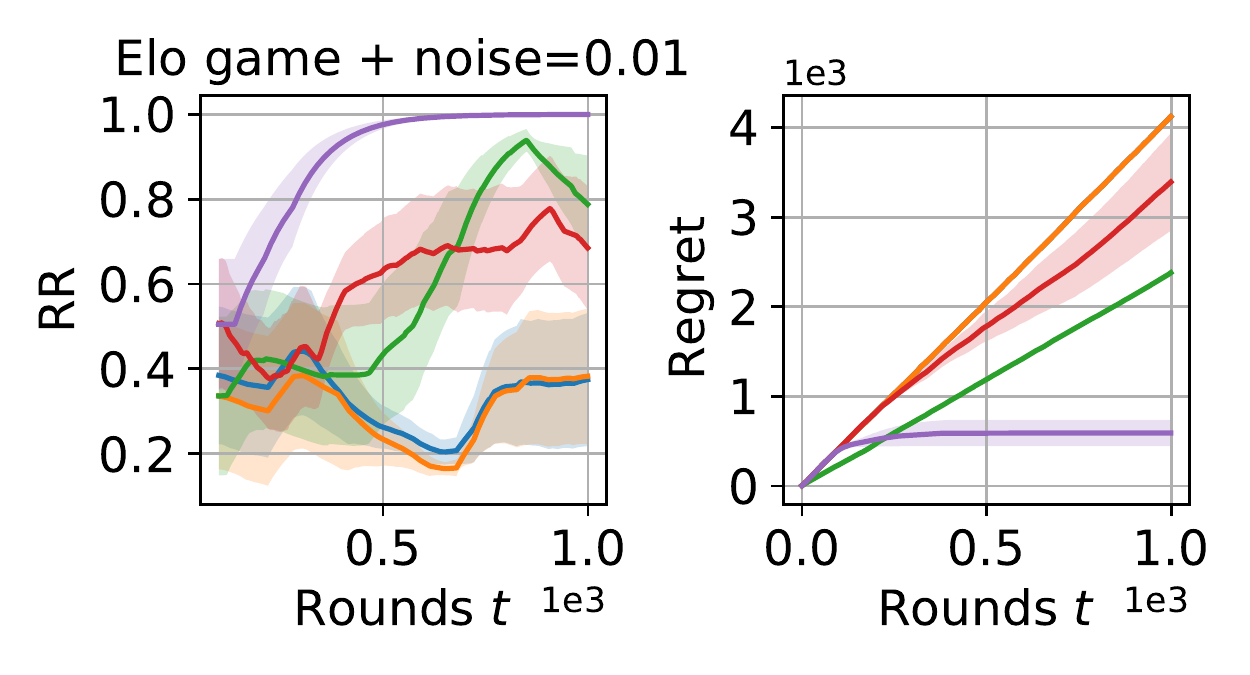}
    \end{subfigure}\\
    \begin{subfigure}
        \centering
        \includegraphics[width=0.45\linewidth]{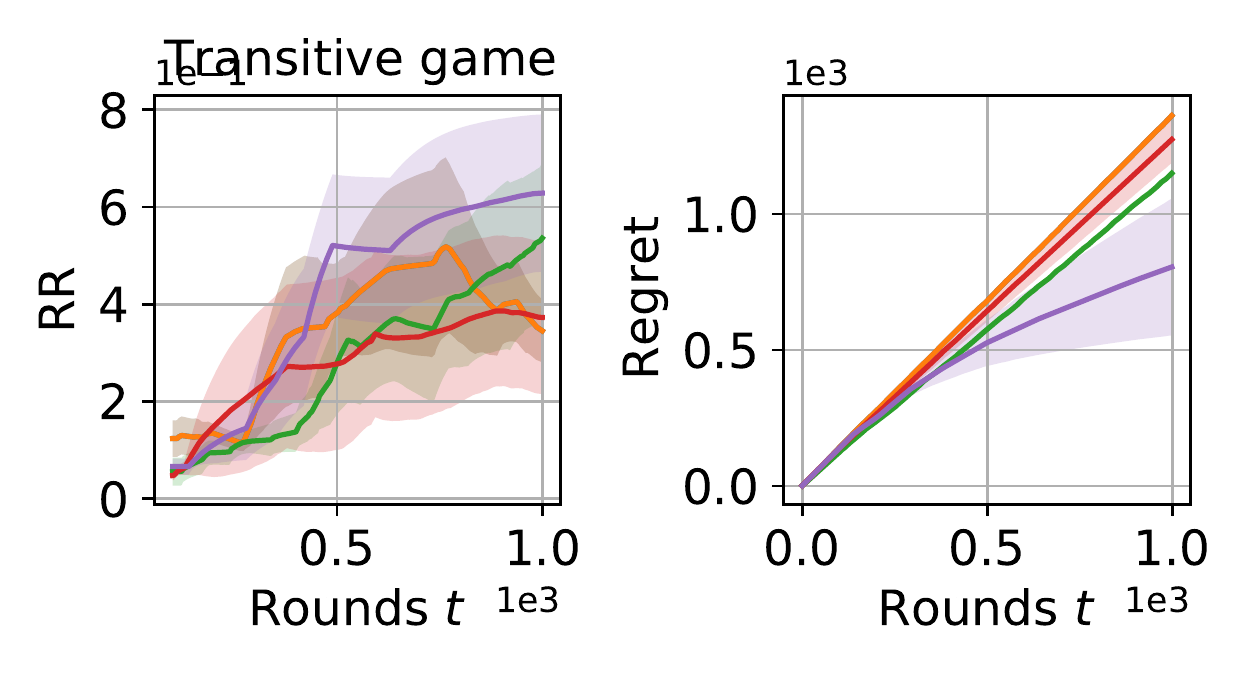}
    \end{subfigure}
    \begin{subfigure}
        \centering
        \includegraphics[width=0.46\linewidth]{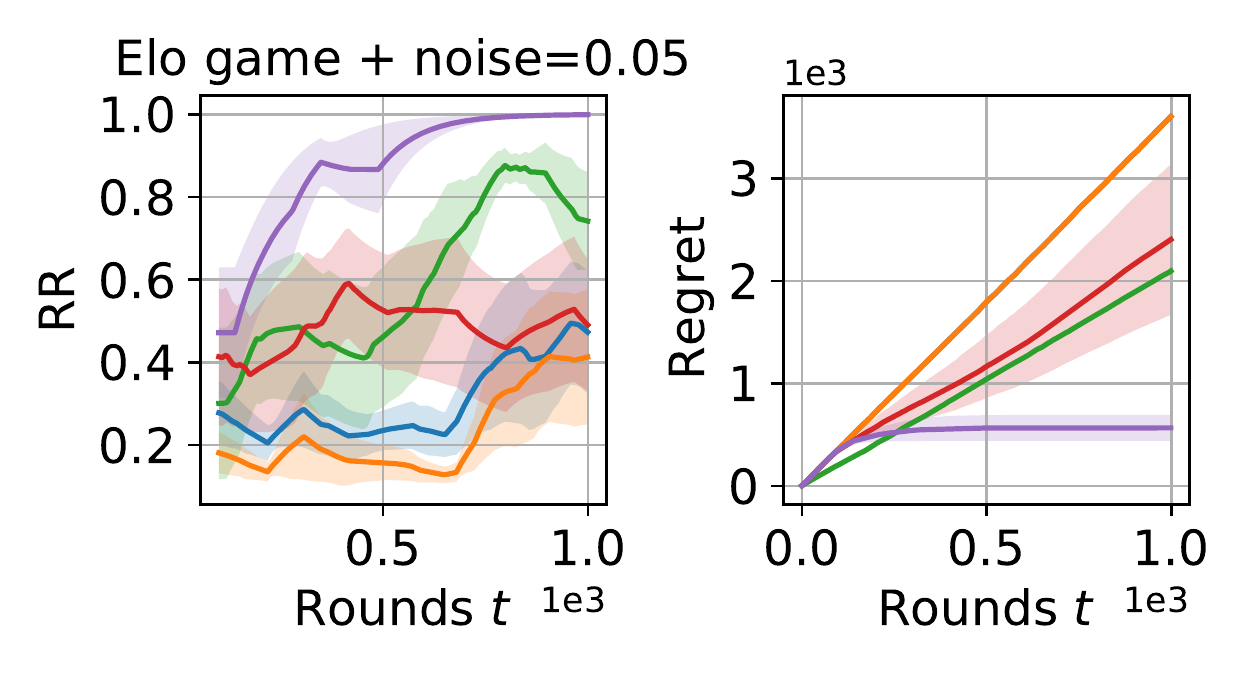}
    \end{subfigure}\\
    \begin{subfigure}
        \centering
        \includegraphics[width=0.46\linewidth]{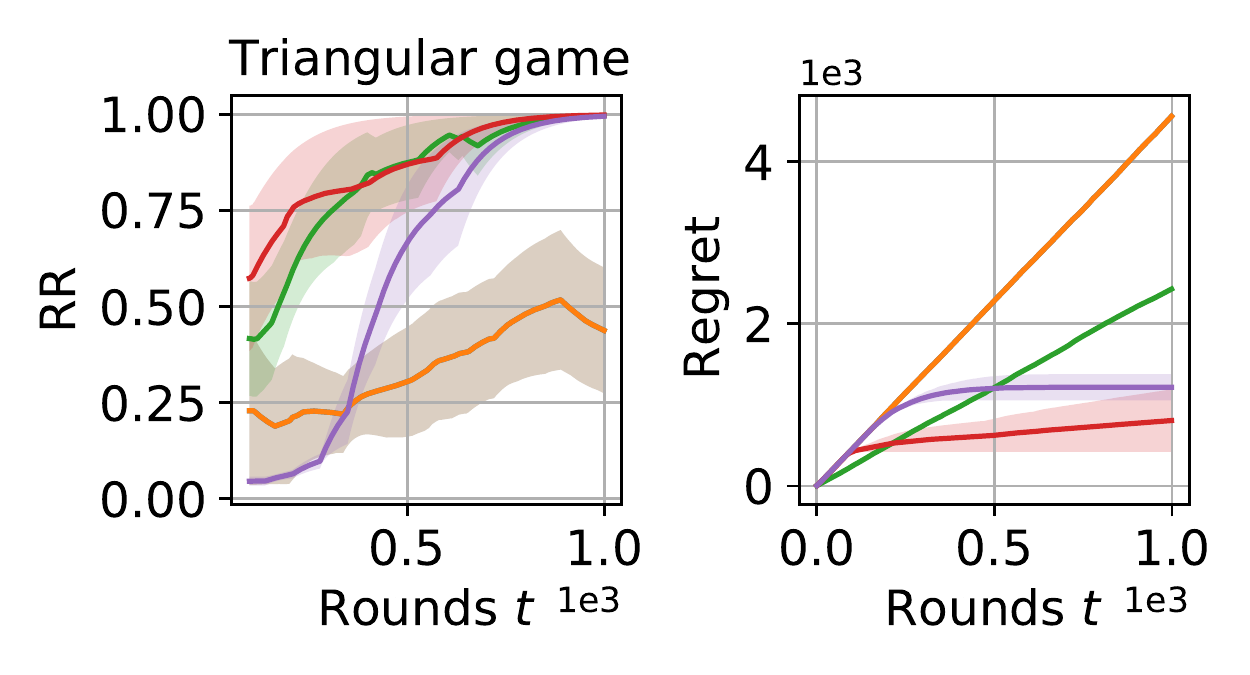}
    \end{subfigure}
    \begin{subfigure}
        \centering
        \includegraphics[width=0.46\linewidth]{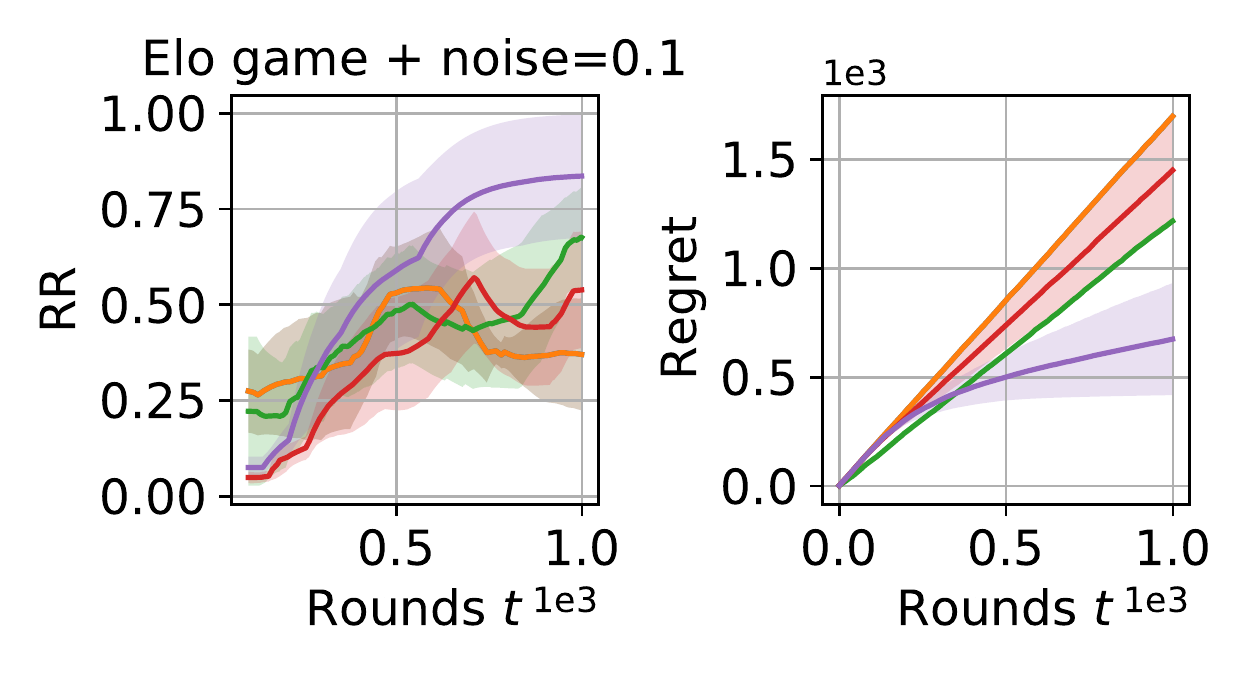}
    \end{subfigure}
    \caption{Results of Elo on transitive games.}
    \label{tab:tran-real-game}
\end{figure}
\setlength{\belowcaptionskip}{-0.5cm}




{\fig \ref{fig:alphaIG}, \ref{tab:tran-real-game}, \ref{tab:intran-real-game} } show the results of top-$1$ identification on 13 games. To ensure a fair comparison between all baselines, we perform a grid search to select parameters with the best RR performance for each random seed. If the winning probability matrix can be fitted into the Elo model, then we calculate the true ratings through \eq \eqref{"eq:elo_hat"},  otherwise we use the mElo ratings as the true ratings through \eq \eqref{"eq:melo_qhat"}.

\textbf{Evaluation of $\ourmethod$}
\fig \ref{fig:alphaIG} shows the results on a $4\times 4$ transitive game. $\ourmethod$ has the highest convergence rate on both RR and cumulative regret metrics, and $\ourmethod$ has the lowest cumulative regret close to $0$. 
As shown in \fig \ref{tab:tran-real-game}, $\ourmethod$  significantly outperforms all other baselines on five games and achieves similar performance on Triangular game.
Regarding the RR metric, 
$\ourmethod$ can converges to $1$ on four games. Even on Transitive game and Elo game + noise=$0.1$, RR scores as up to $0.6$ and $0.8$ respectively, which indicates that the rank of the top player no more than $2$. Thus we think $\ourmethod$ has the ability to effectively identify the top player . 
On the Elo game, Elo game + noise=$0.01$, and Elo game + noise=$0.01$, the cumulative regret are closed to convergence at around $500$ rounds. 
When the cumulative regret meets convergence, the candidate optimal set $\mathcal{S}$ only contains the top player, and no regret increasing.  

Different from the other 5 stochastic games, Triangular game is a deterministic game with all winning probabilities are equal to $1$ or $0$, thus it is easy to evaluate. For DBGD baseline, it maintains the current best player and randomly selects an opponent, so it could find the best player more quickly, but has a large cumulative regret because of randomly selected opponents.  


\begin{figure}
    \centering
    \begin{subfigure}
        \centering
        \includegraphics[width=0.96\linewidth]{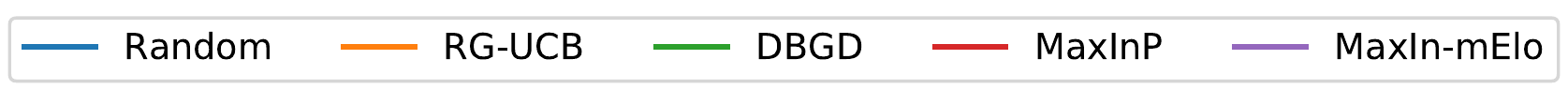}
    \end{subfigure}\\
    \begin{subfigure}
        \centering
        \includegraphics[width=0.46\linewidth]{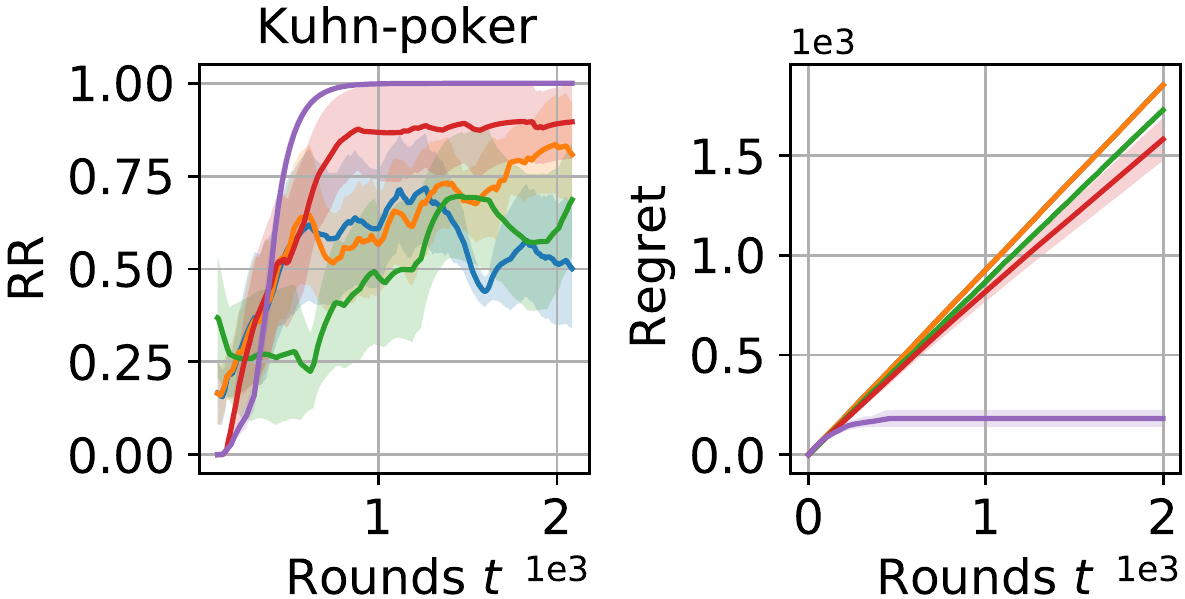}
    \end{subfigure}
    \begin{subfigure}
        \centering
        \includegraphics[width=0.46\linewidth]{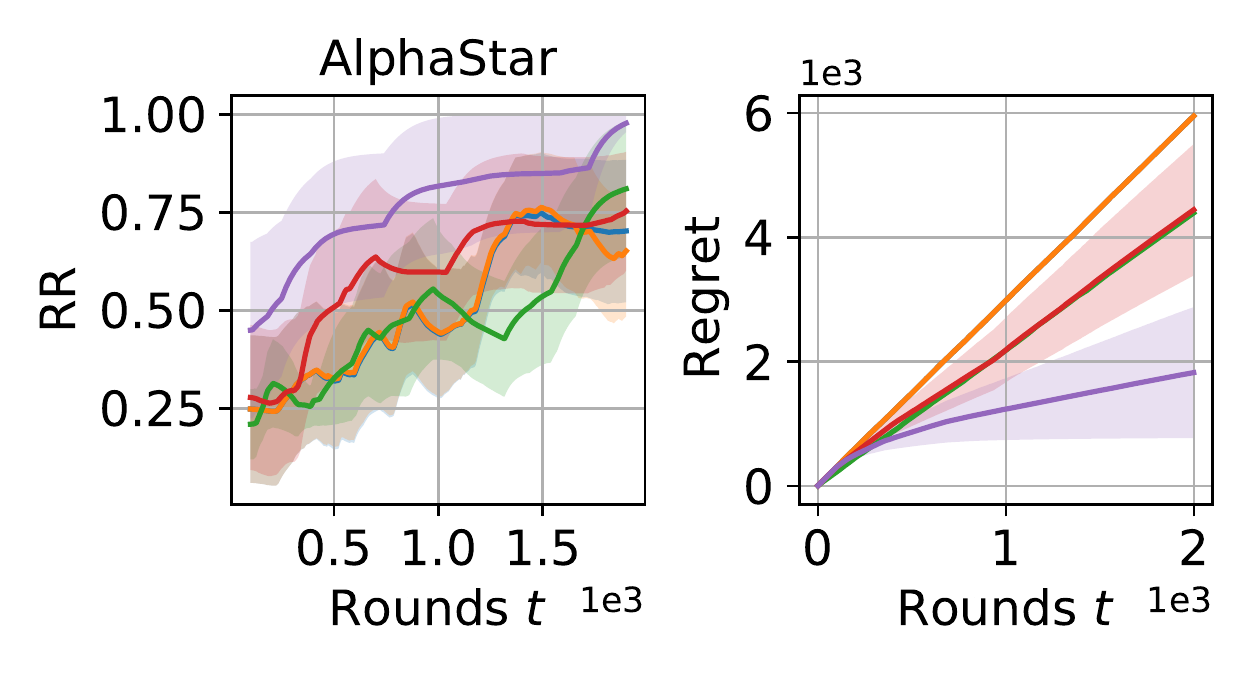}
    \end{subfigure}\\
    \begin{subfigure}
        \centering
        \includegraphics[width=0.46\linewidth]{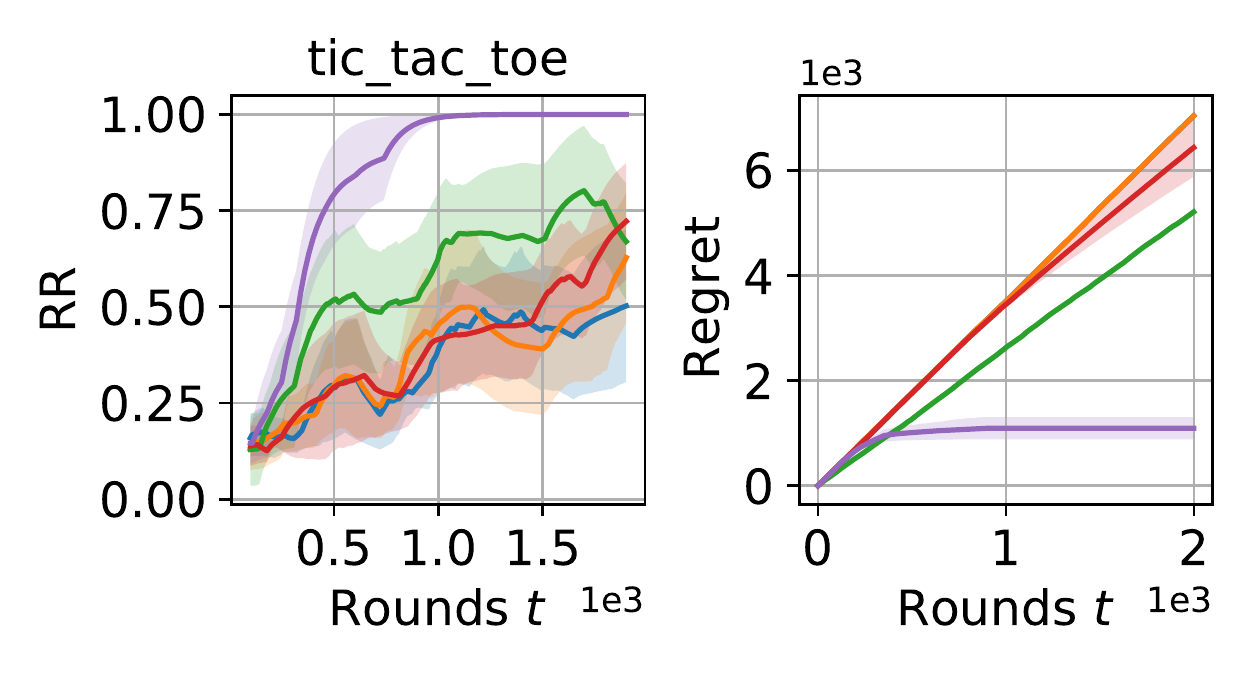}
    \end{subfigure}
    \begin{subfigure}
        \centering
        \includegraphics[width=0.46\linewidth]{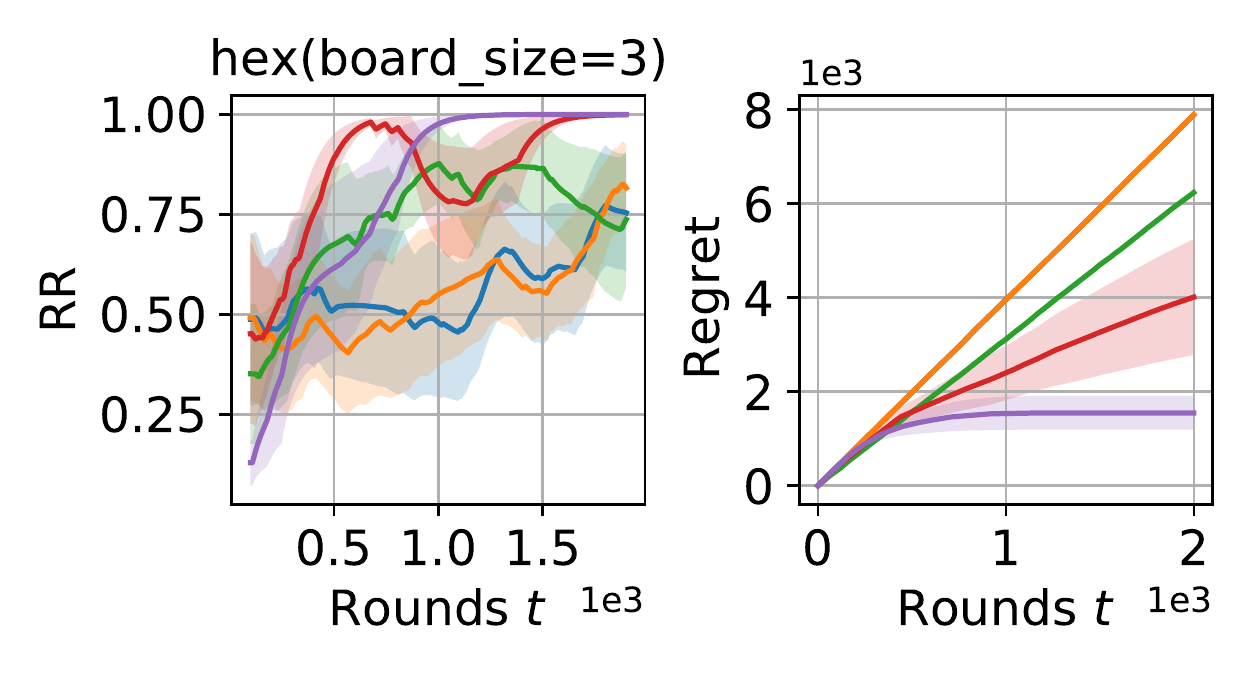}
    \end{subfigure}\\
    \begin{subfigure}
        \centering
        \includegraphics[width=0.46\linewidth]{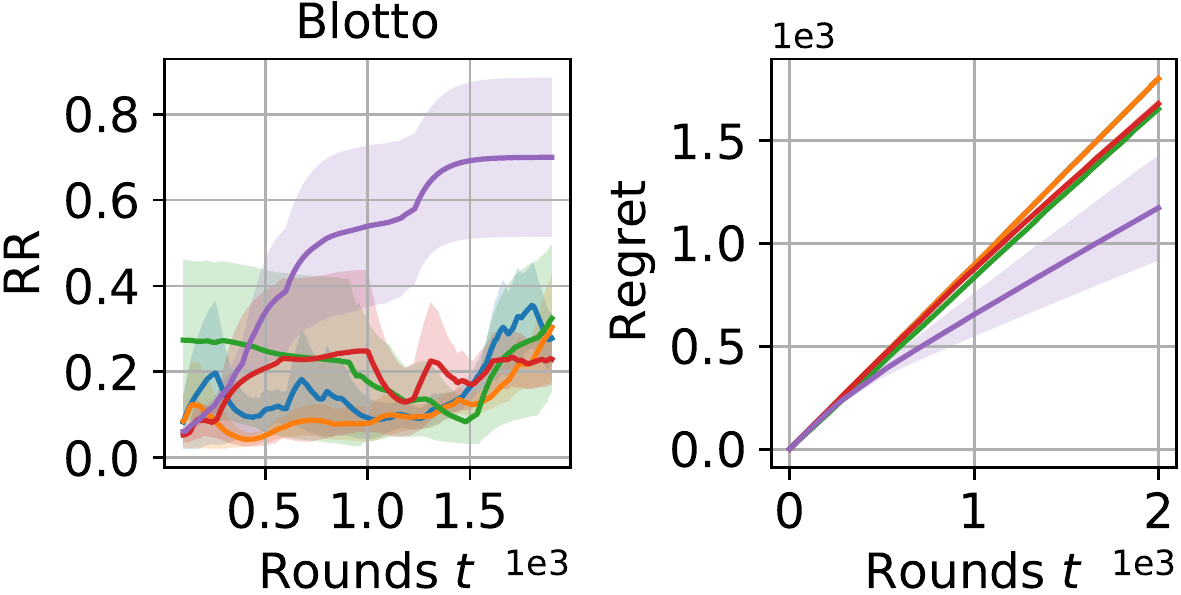}
    \end{subfigure}
    \begin{subfigure}
        \centering
        \includegraphics[width=0.46\linewidth]{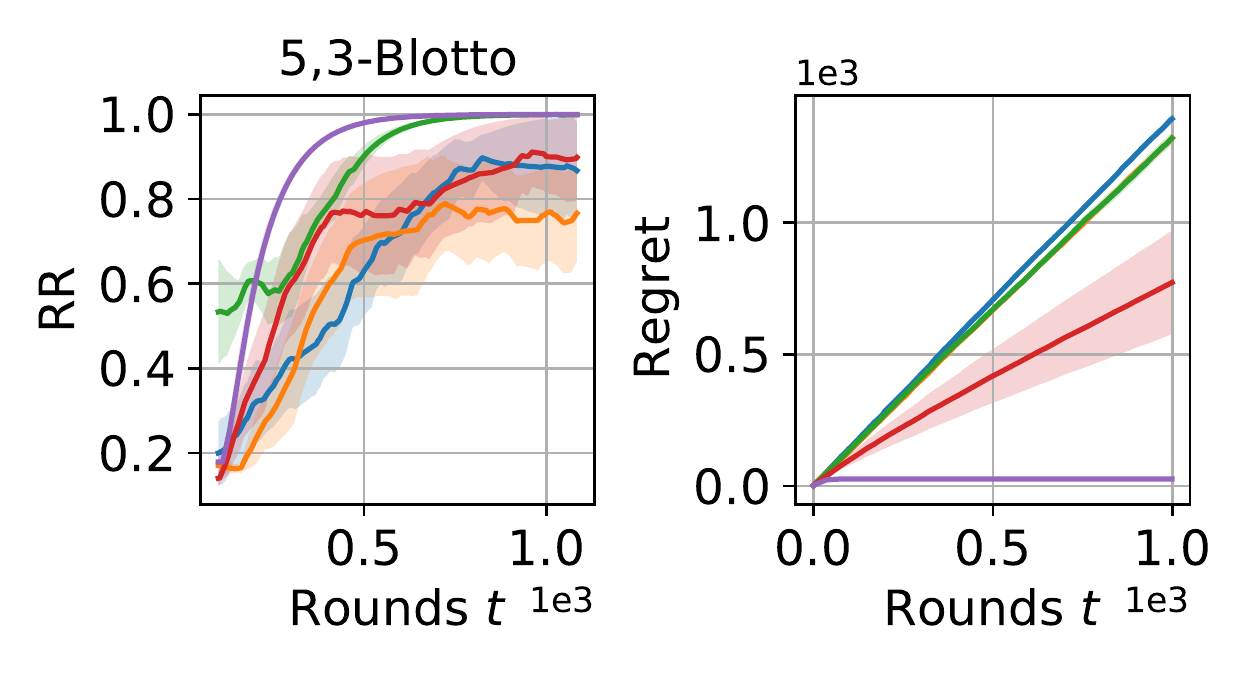}
    \end{subfigure}
    \caption{Results of mElo on intransitive games.}
    \label{tab:intran-real-game}
\end{figure}
\setlength{\belowcaptionskip}{-0.5cm} 

\textbf{Evaluation of $\ourmethodii$}
\fig \ref{tab:intran-real-game} shows the results of baselines on six real-world intransitive games. MaxInP is based on the Elo model for it is a special generalized linear model only with rating parameter $ r$ without cyclic vector parameter $c$, but all other baselines are based on the mElo model. 
As the \fig \ref{tab:intran-real-game} shows, $\ourmethodii$ has the lowest cumulative regret and the highest RR on all six games. With regard to the RR, $\ourmethodii$ can be up to $1$ on all games except for Blotto. One possible reason why $\ourmethodii$ cannot be up to $1$ on Blotto may be that its size of top SSCC is very large.
The other reason is that we use the low-rank approximation of the probability matrix's rotation on the mElo model. Although we misidentified the top-$1$ player, we are still better than all other baselines.

\paragraph{Results of Top-$k$ player identification}
\fig \ref{tab:tran-real-game-topk} gives the results of top-$k$ predictions on transitive games.
$\ourmethod$ and MaxInP both have a  parameter $\gamma$ used to balance exploration and exploitation, larger $\gamma$ can lead to a larger candidate set then lead to better top-$k$ performance. 
We keep other parameters fixed and run experiments with different $\gamma\in \{0.2, 0.4, 0.6, ..., 2.0\}$, and we report the performance of MaxInP and $\ourmethod$ under the best $\gamma$. 
\fig \ref{tab:tran-real-game-topk} shows that $\ourmethod$ has the best performance of the top-$1$ identification on all games, and it achieves the comparable performance of top-$k$ identification on most games. Results of different $\gamma$ can be found in Appendix.


\begin{figure}[t!]
    \centering
    \begin{subfigure}
        \centering
        \includegraphics[width=0.96\linewidth]{figs/legend.pdf}
    \end{subfigure}\\
    \begin{subfigure}
        \centering
        \includegraphics[width=0.46\linewidth]{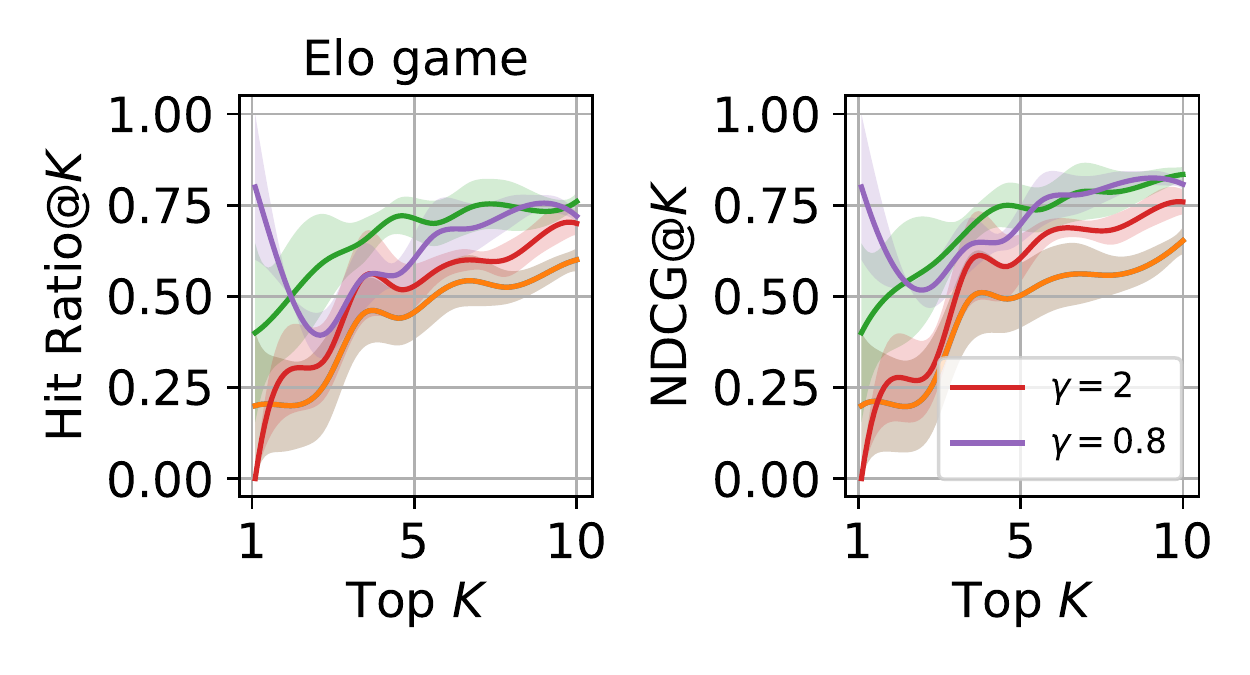}
    \end{subfigure}
    \begin{subfigure}
        \centering
        \includegraphics[width=0.46\linewidth]{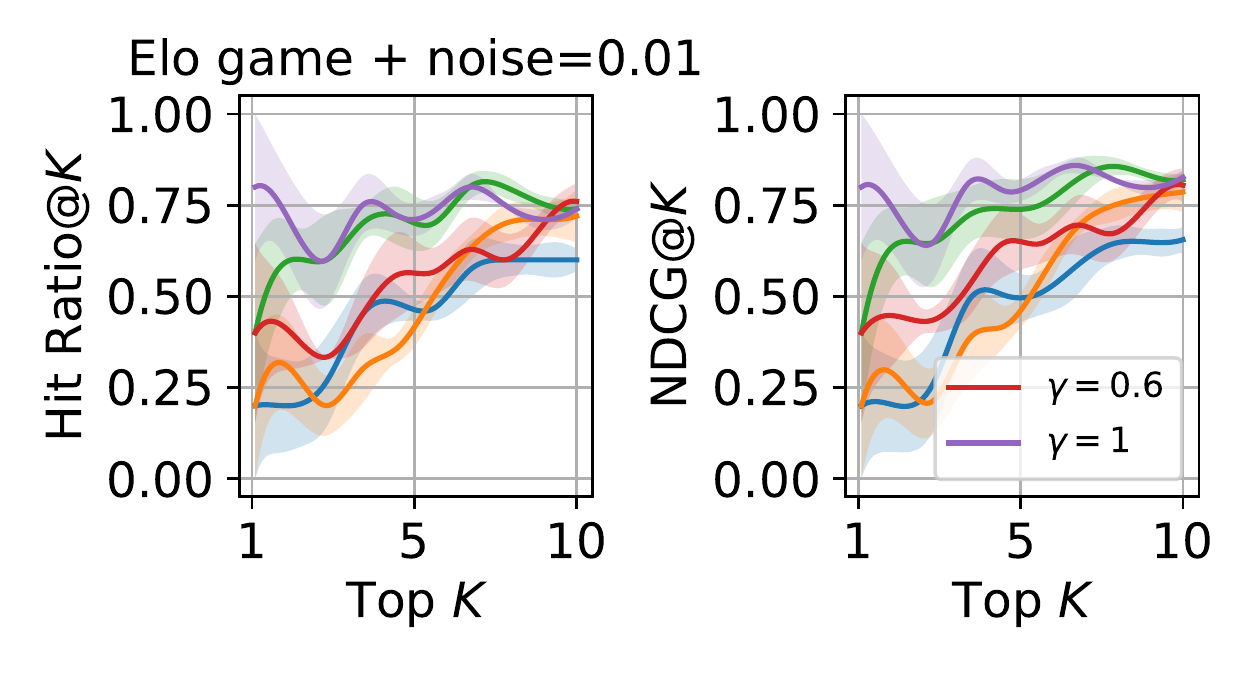}
    \end{subfigure}\\
    \begin{subfigure}
        \centering
        \includegraphics[width=0.45\linewidth]{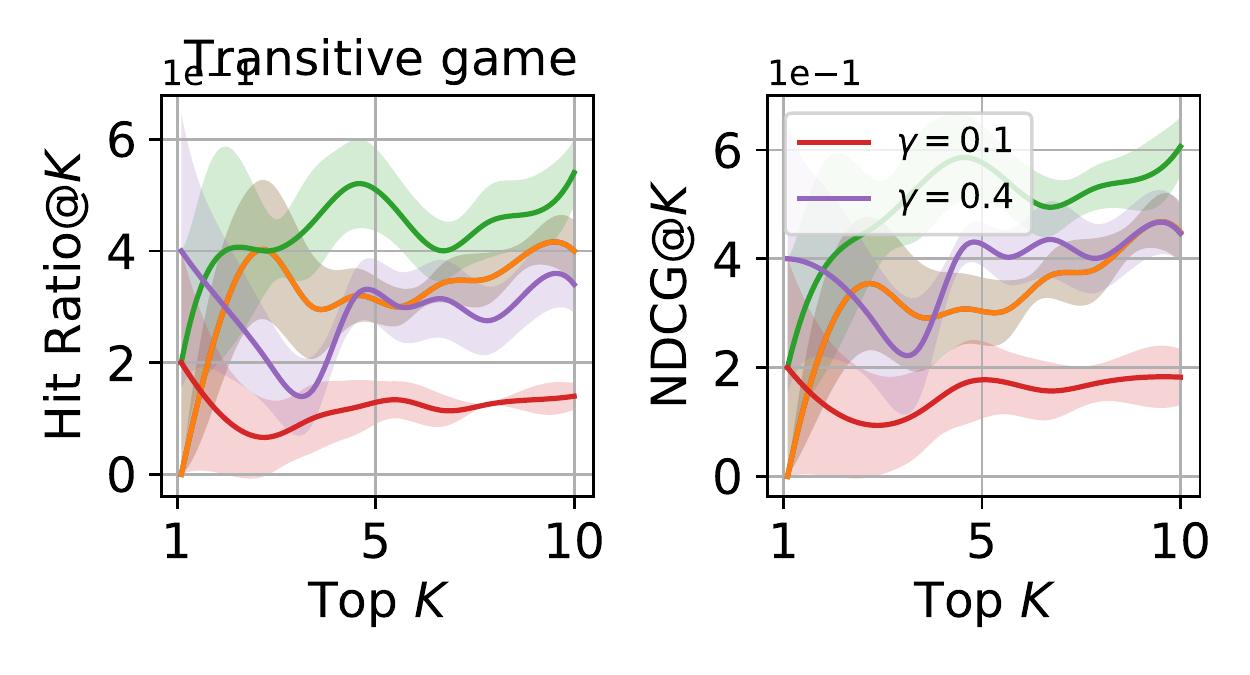}
    \end{subfigure}
    \begin{subfigure}
        \centering
        \includegraphics[width=0.46\linewidth]{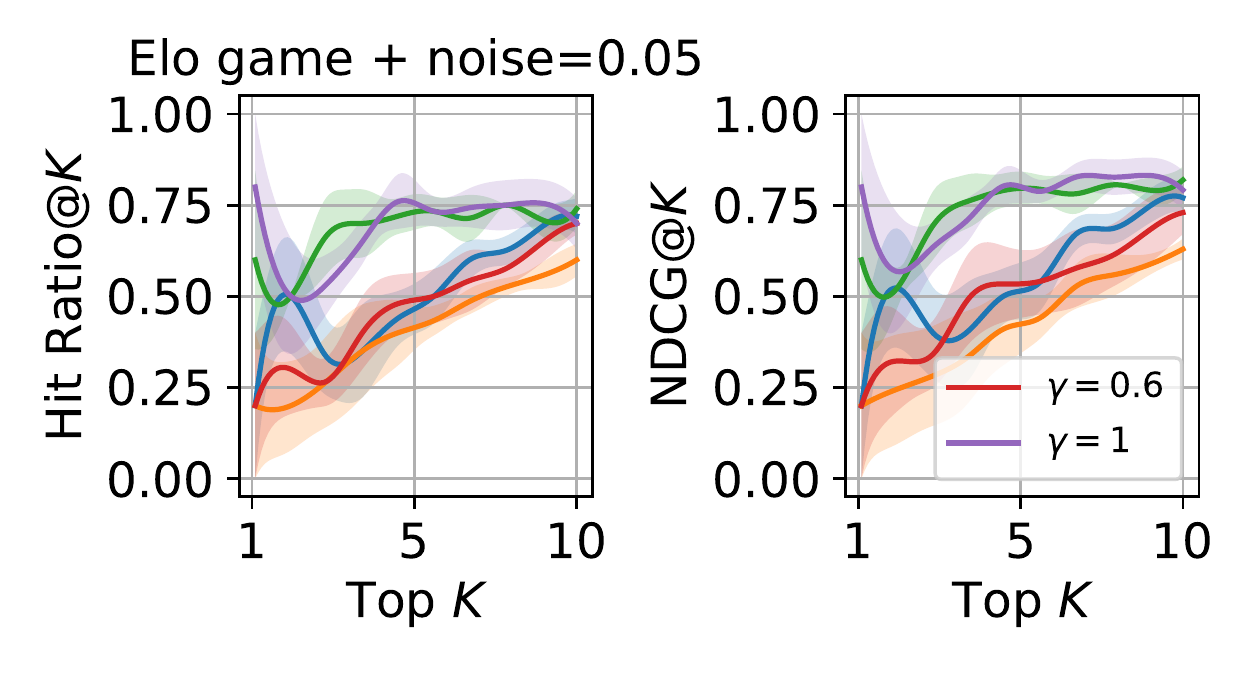}
    \end{subfigure}\\
    \begin{subfigure}
        \centering
        \includegraphics[width=0.46\linewidth]{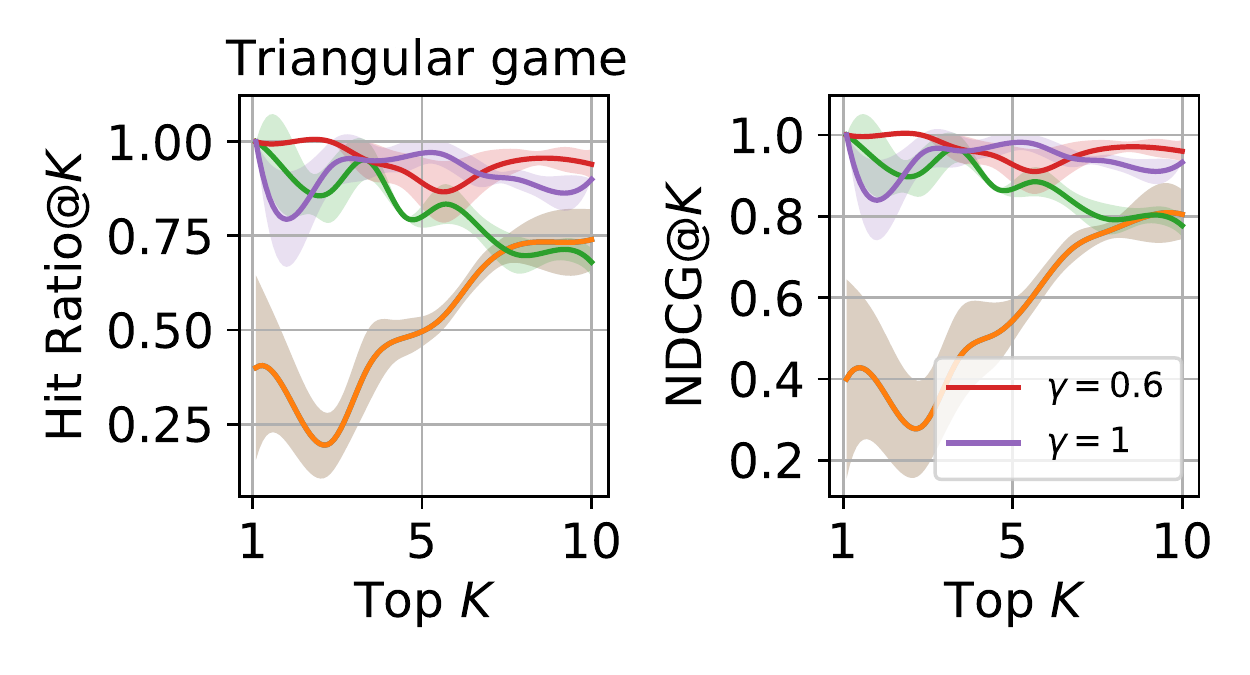}
    \end{subfigure}
    \begin{subfigure}
        \centering
        \includegraphics[width=0.46\linewidth]{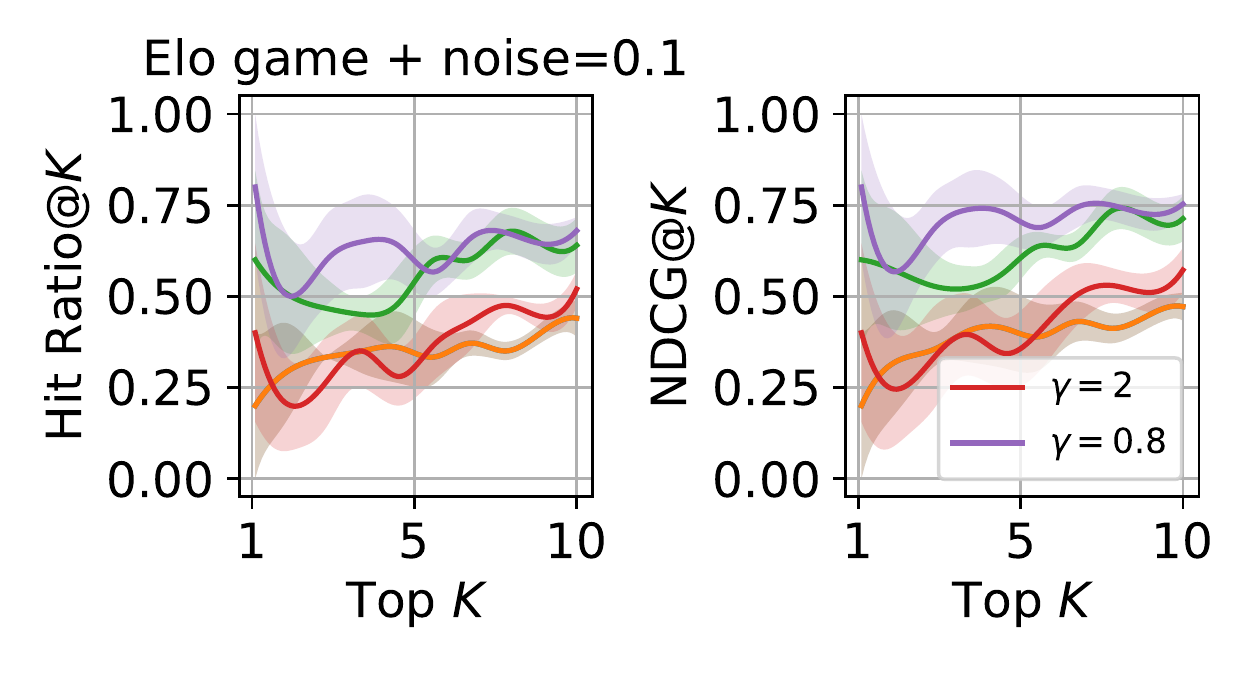}
    \end{subfigure}
    \caption{Results of Top-$k$ player identification on transitive games.
    {$\gamma$ in \textcolor{red}{red} and \textcolor{violet}{purple} indicates that reports best performance for MaxInP and $\ourmethod$ respectively.}}
    \label{tab:tran-real-game-topk}
\end{figure}
\setlength{\belowcaptionskip}{-0.5cm}

\section{Discussions}\label{sec:conclusion}
This work studied the problem of multi-agent evaluation with Elo ratings.  We have adopted an online match scheduling framework to improve the sample efficiency of the Elo rating system and its extension mElo for the intransitive settings. Both empirical and theoretical results justify that our algorithms can achieve higher sample efficiency and lower regret on most of the tasks.

We consider two limitations of this work. Firstly, the match outcome prediction in our algorithm is based on only ratings without considering features that describe players. Future work may consider adding features into the match prediction.
Secondly,  our algorithm focuses more on identifying the best player without being tailored for identifying top-$k$ players. Future work can consider active sampling that achieves better results on both top-1 and top-$k$ cases.

\section*{Ethical Statement}
This work proposes algorithms for online match scheduling that improve the efficiency in identifying top players in competitive games such as chess. While empirical studies in this work, which are based on AI agents, have demonstrated the superior gain of using our proposed methods, there is a caveat that our algorithms assume that the all players' skill levels remain unchanged throughout the repeated competition rounds. This assumption likely does not hold for human players whose playing strengths will be affected by energy consumption due to frequent matches. Therefore, extra caution needs to be taken when deploying our methods to schedule real-world competitions involving human players and an interesting research extension would be to model such performance strength changes explicitly in designing the match scheduling algorithms.

\balance
\bibliography{ref}

\onecolumn
\appendix
\section{Additional Details on Algorithms }\label{app:algorithms}

\subsection{Additional background on multi-dimensional Elo ratings (mElo)}
The winning probability matrix $P$ can be converted into an antisymmetric matrix $A=\sigma^{-1}(P)$ with $A_{ij}=\log\frac{P_{ij}}{1-P_{ij}}$. 

From the Hodge decomposition theory \cite{jiang2011statistical}, the antisymmetric matrix $A$ decomposes as $A=grad(r)+rot(A)$, {where $rot(A)_{ij}=\frac{1}{n}\sum_{k=1}^n A_{ij}+A_{jk}-A_{ik}$, capturing the intransitive relations,  $grad(r)=r \cdot\bm{1}^\top-\bm{1}\cdot r^\top$, indicating the combination gradient of divergence vector $r=\frac{1}{n}A\cdot1$.}
According to the Schur decomposition, mElo extends the Elo model by learning an low-rank($2k$) approximation of rotation matrix $rot(A)$, the cyclic component of $A$. 
Consider
$$A=grad(r)+rot(A)\approx grad(r)+C_{n\times2k}\Omega_{2k\times2k}C_{n\times2k}^\top,$$ 
where $\Omega_{2k\times2k}=\sum_{i=1}^k(e_{2i-1}e_{2i}^\top-e_{2i}e_{2i-1}^\top).$
By learning a $2k$-dimensional vector $c_x$ and a rating $r_x$ per player, the win-loss prediction for mElo$_{2k}$ is defined as:
\begin{equation}\label{"eq:melo_qhat_app"}
\hat{p}_{xy}=\sigma\left(r_{x}-r_{y}+{c}_{x}^{\top} \cdot {\Omega}_{2 k \times 2 k} \cdot {c}_{y}\right).
\end{equation}
Notice that the Elo model uses $k=0$.

Based on the probability estimation function in \eq \eqref{"eq:melo_qhat_app"}, the update functions equivalent to \eq \eqref{eq:elo-grad-descent} on mElo is:
\begin{align}\label{eq:melo_gradient}
   r_{x}^{t+1} &=r_{x}^{t}+\eta \cdot\left(o_{xy}^{t}-\hat{p}_{xy}^{t}\right), \\
      c_x^{t+1}(2i-1) & =c_x^{t}(2i-1)+\eta \cdot\left(o^t_{xy}-\hat{p}_{xy}^{t}\right)c_y^{t}(2i), \\
     c_x^{t+1}(2i) &  = c_x^{t}(2i)-\eta \cdot\left(o_{xy}^{t}-\hat{p}_{xy}^{t}\right)c_y^{t}(2i-1).
\end{align}

{Comparing to the Elo ratings,  mElo$_{2k}$ adopts a different win-loss prediction function by introducing a vector $c_x$, but still follows the  gradient descent update. }
 
\subsection{Algorithm for $\ourmethodii$}
\alg \ref{alg:melo-sgd-UCB} gives the details of $\ourmethodii$. Our MaxIn-mElo algorithm can be obtained by modifying the loss function, gradient and UCB score function in our MaxIn-Elo algorithm. In order to solve the intransitive games, our MaxIn-mElo employs the mElo model, and considers both the transitive and cyclic components to calculate the UCB score. 
\begin{algorithm}[th!]
\caption{$\ourmethodii $: Dueling bandits with online SGD for mElo. }
\label{alg:melo-sgd-UCB}
\begin{algorithmic}[1]
    \REQUIRE $T, N, \tau, \alpha,\gamma$
    \ENSURE output $\theta$
    \STATE Randomly choose a pair to compare and record as $x_t,y_t, o_t$ for $t \in [\tau]$
    \STATE $V_{\tau+1}=\sum_{t=1}^\tau (e_{x_t}-e_{y_t})(e_{x_t}-e_{y_t})^T$
     \STATE Initialize $\hat{\bm r}_\tau $ by MLE as Line 3 in Alg. \ref{alg:dueling-sgd-UCB} and $c$.
    \STATE Maintain convex set $\mathcal{C}=\left\{r:\left\|r-\hat{r}_{\tau}\right\| \leq 2\right\}$
     \FOR{ $t=\tau+1, \tau+2, \dots, T$}
        \IF{ $t\% \tau =1$ }
        \STATE $j \leftarrow \lfloor(t-1)/\tau \rfloor $ and $\eta_j = \frac{1}{\alpha j}$
        \STATE 
        Calculate gradient $\nabla_{r}l_{j,\tau}(\tilde{r}_{j-1})$ according to \eq \eqref{eq:elo-grad-descent}
        \STATE  Update gradient $ \tilde{r}_{j} \leftarrow \prod_{\mathcal{C}}\left(\tilde{r}_{j-1}-\eta_{j} \nabla_{r}l_{j,\tau}(\tilde{r}_{j-1}) \right)$
        \STATE Compute $\bar{r}=\frac{1}{j} \sum_{q=1}^{j} \tilde{r}_{q}$
        \STATE Calculate gradient $\nabla_{C}l_{j,\tau}(\tilde{C}_{j-1})$ 
        \STATE Update gradient $ \tilde{C}_{j} \leftarrow \tilde{C}_{j-1}-\eta_{j} \nabla_{C}l_{j,\tau}(\tilde{C}_{j-1})$
        \ENDIF
        \STATE Define a candidate optimal set $\mathcal{S}=\{x\ |\ \bar{r}_x-\bar{r}_y+\bar{c}_x^T\bm \Omega \bar{c}_y+\gamma\
\left\|e_x-e_y\right\|_{V_{t}^{-1}}>0 ,\ \forall y \in[n]/\{x\}\}$
        \STATE Select a pair$(x_t,y_t)=\argmax_{(x,y) \in \mathcal{S}}\left\|e_x-e_y\right\|_{V_{t}^{-1}}$.
        \STATE Pull arms $(x_t, y_t)$ and observe $o_t(x_t, y_t)$
        \STATE Compute $V_{t+1}=V_{t}+(e_{x_t}-e_{y_t})(e_{x_t}-e_{y_t})^T$
        \ENDFOR
\end{algorithmic}
\end{algorithm}

\begin{table}[t!]
\centering
\caption{Statistics of twelve real world games. $|\text{Top SSCC}|$ indicates the size of the SSCC with best performance.} 
\begin{tabular}{ccccc}\\\toprule  
Game                         & \# Policies & $|\text{Top SSCC}|$ & Transitivity  \\ \hline
Transitive game              & 100 & 1  &Yes     \\
Triangular game              & 100 & 1  &Yes  \\
Elo game                     & 100 & 1  &Yes    \\
Elo game + noise=0.01         & 100 & 1 &Yes \\
Elo game + noise=0.05         & 100 & 1 &Yes  \\
Elo game + noise=0.1         & 100 & 1 &Yes  \\
Kuhn-poker                   & 64   & 64 &No    \\
AlphaStar                    & 100  & 1 &No \\
tic$\_$tac\_toe                & 100  & 2 &No  \\
hex(board\_size=3)          & 100 & 2 &No  \\
Blotto  & 100&99 &No              \\
5,3-Blotto      & 21 & 18 &No             \\

\hline
\end{tabular}
\label{tab:real-game-statistics}
\end{table}

\begin{figure}[th!]
    \centering
    \begin{subfigure}
        \centering
        \includegraphics[width=0.66\linewidth]{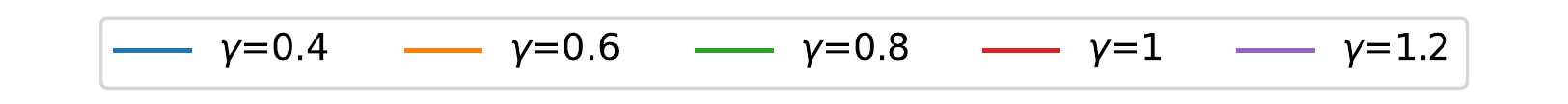}
    \end{subfigure}\\
    
    \begin{subfigure}
        \centering
        \includegraphics[width=0.32\linewidth]{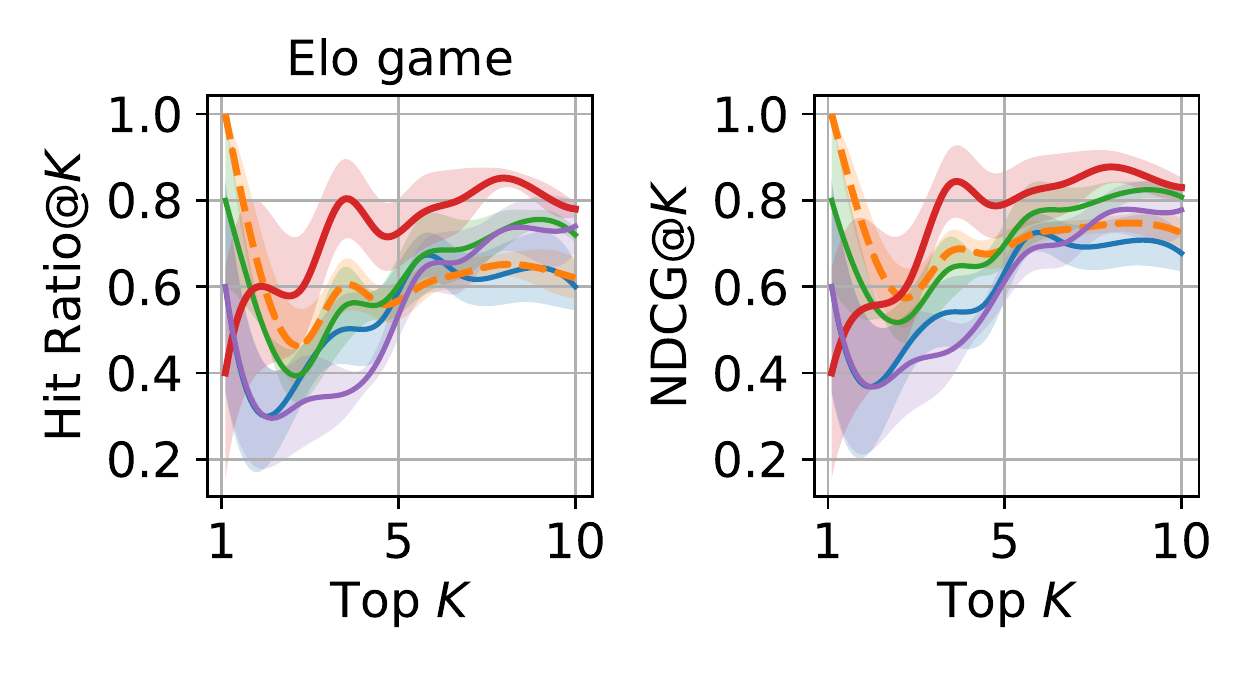}
    \end{subfigure}
    \begin{subfigure}
        \centering
        \includegraphics[width=0.32\linewidth]{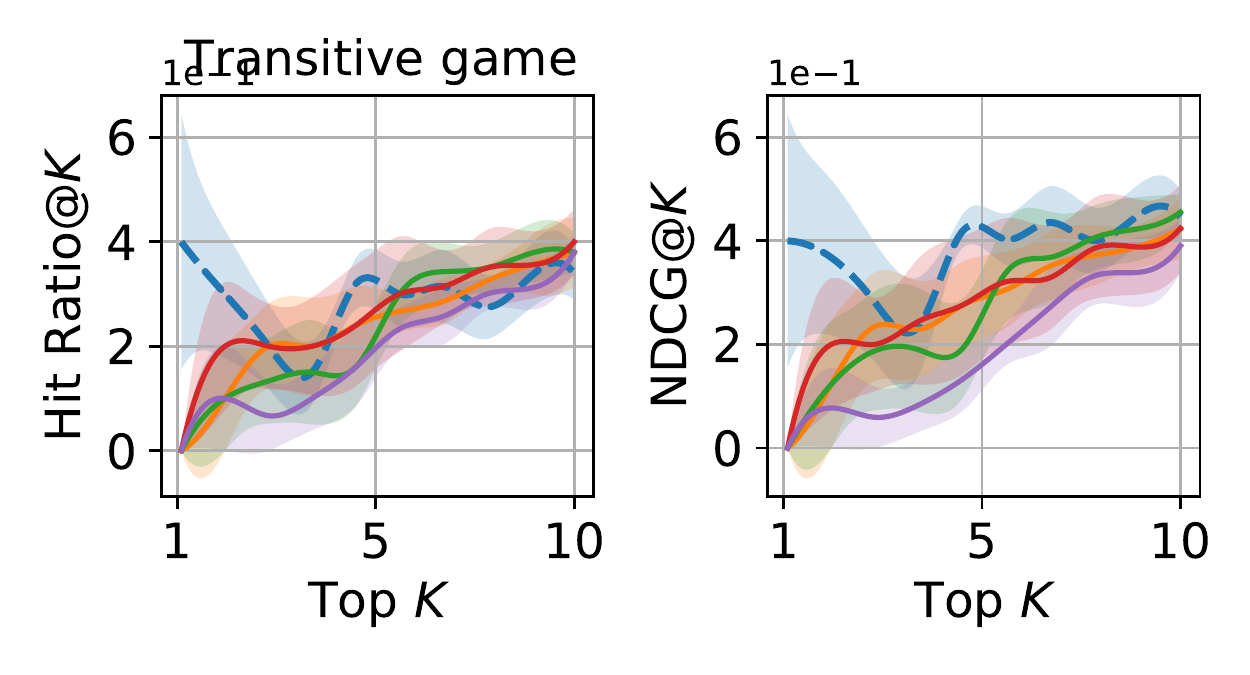}
    \end{subfigure}
    \begin{subfigure}
        \centering
        \includegraphics[width=0.32\linewidth]{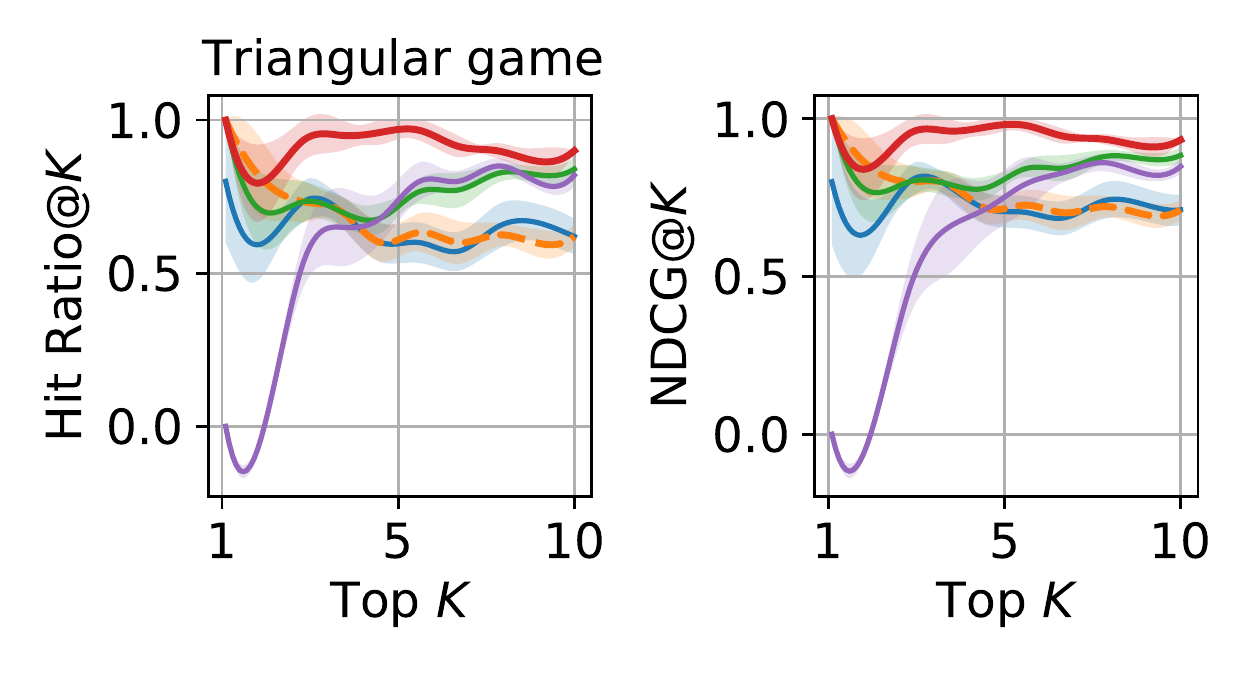}
    \end{subfigure}\\
    \begin{subfigure}
        \centering
        \includegraphics[width=0.32\linewidth]{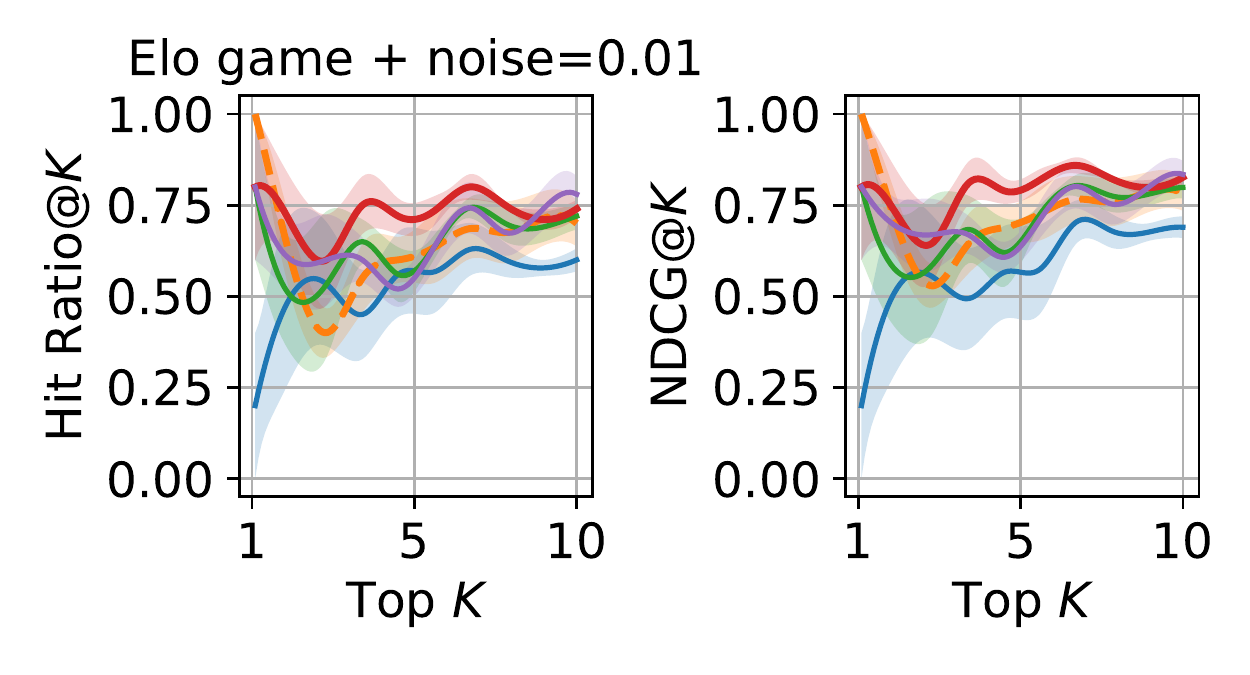}
    \end{subfigure}
    \begin{subfigure}
        \centering
        \includegraphics[width=0.32\linewidth]{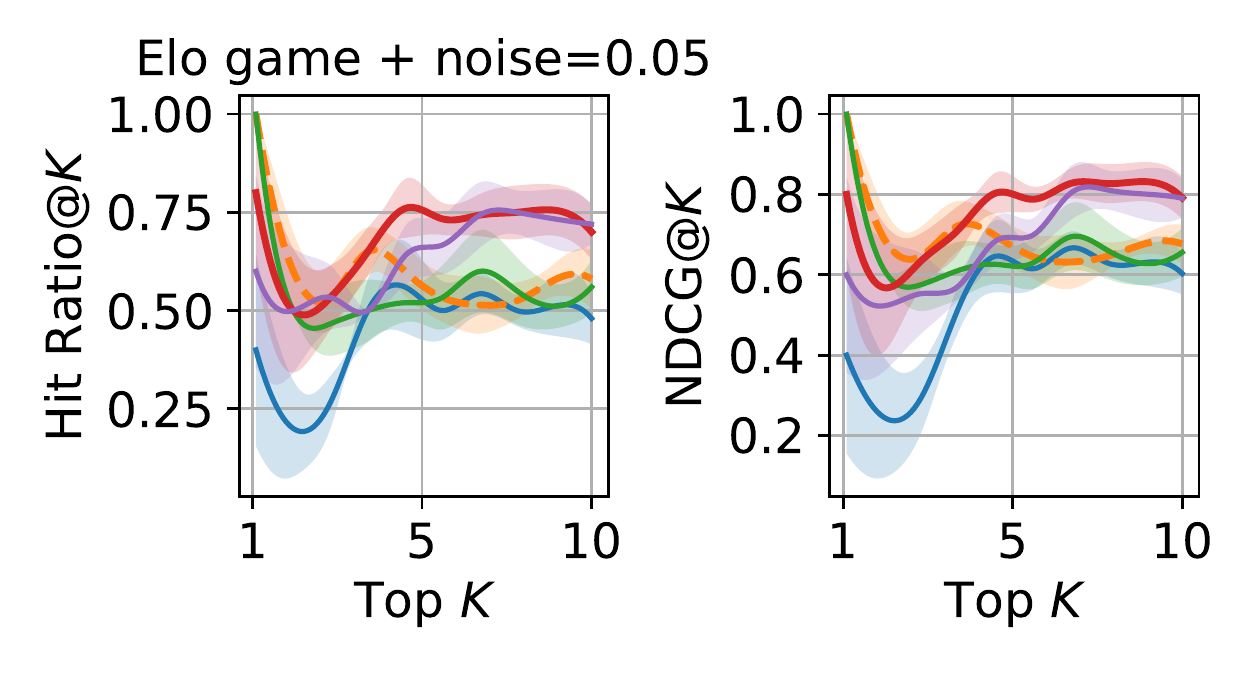}
    \end{subfigure}
    \begin{subfigure}
        \centering
        \includegraphics[width=0.32\linewidth]{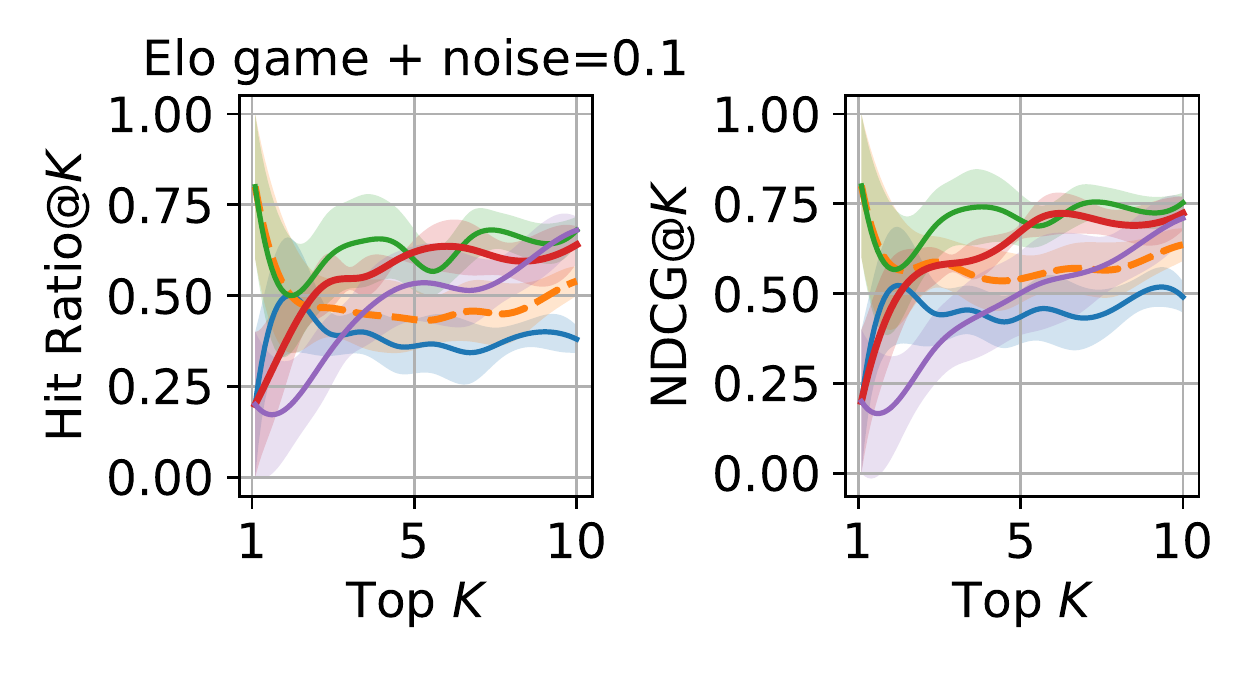}
    \end{subfigure}
    \caption{Additional results of different $\gamma$ of our $\ourmethod$ under top-$k$ performance. 
    The best top-1 curves are in dashed lines. The best top-$k$ curves are in bold.
    }
    \label{fig:elo-topk}
\end{figure}

\begin{figure}[th!]
    \centering
    \begin{subfigure}
        \centering
        \includegraphics[width=0.6\linewidth]{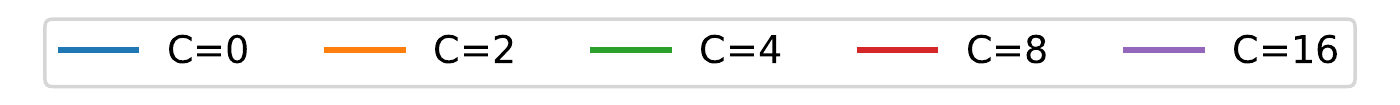}
    \end{subfigure}
    \begin{subfigure}
    \centering
        \includegraphics[width=0.48\linewidth]{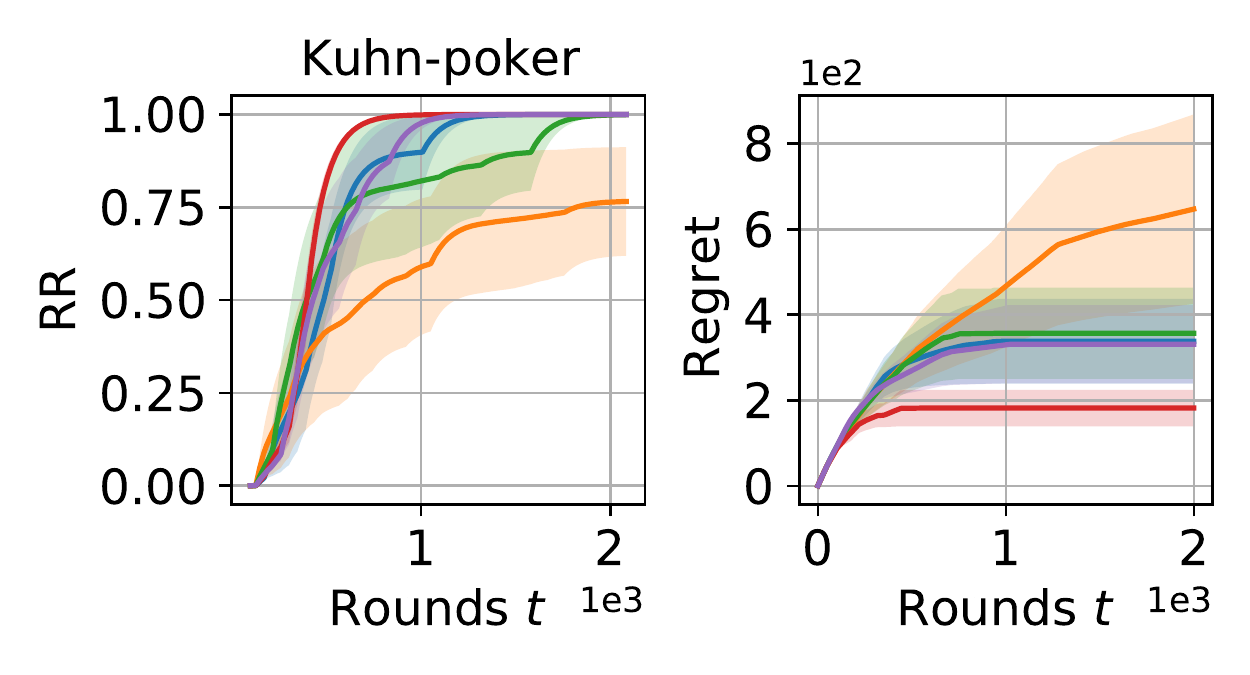}
    \end{subfigure}
    \begin{subfigure}
    \centering
        \includegraphics[width=0.48\linewidth]{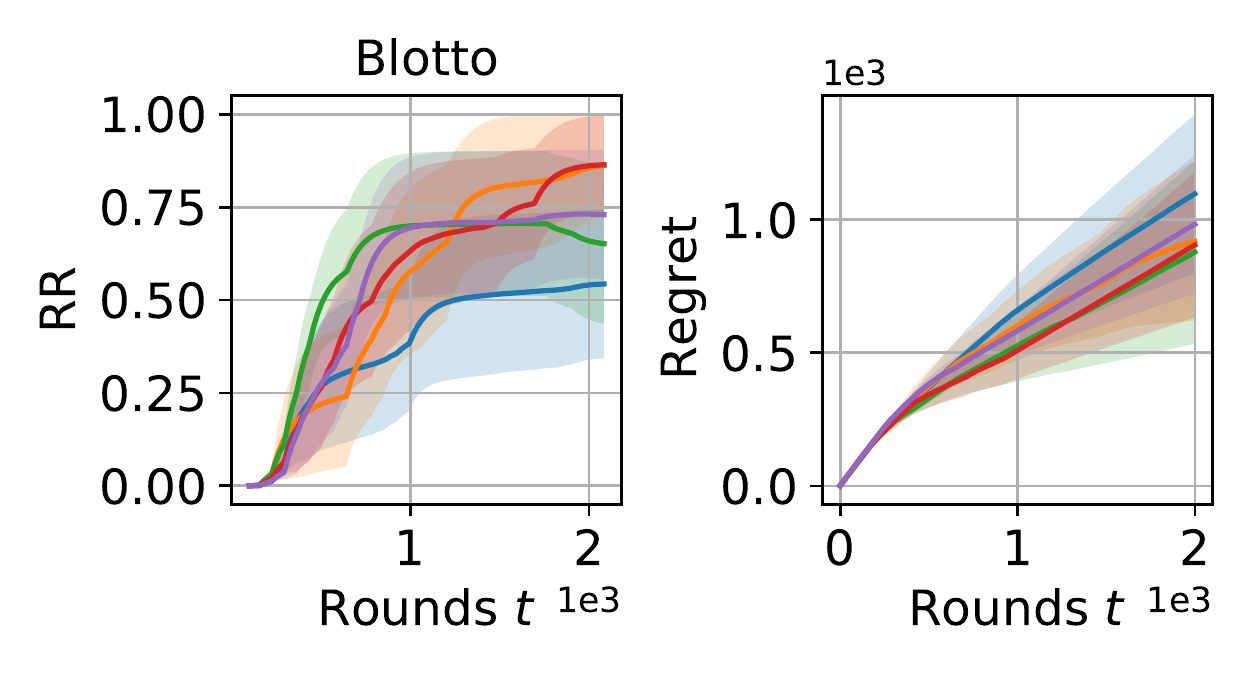}
    \end{subfigure}
    \caption{Results of different parameters of $C$ of $\ourmethodii$. 
    }
    \label{fig:cdim-melo-real-game}
\end{figure}

\begin{figure}[th!]
    \centering
    \begin{subfigure}
        \centering
        \includegraphics[width=0.8\linewidth]{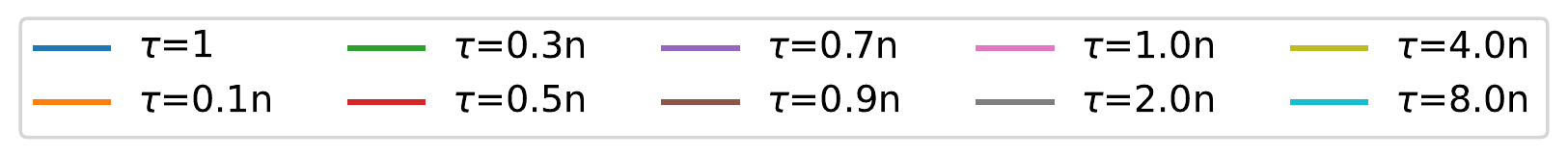}
    \end{subfigure}\\
    \begin{subfigure}
    \centering
        \includegraphics[width=0.48\linewidth]{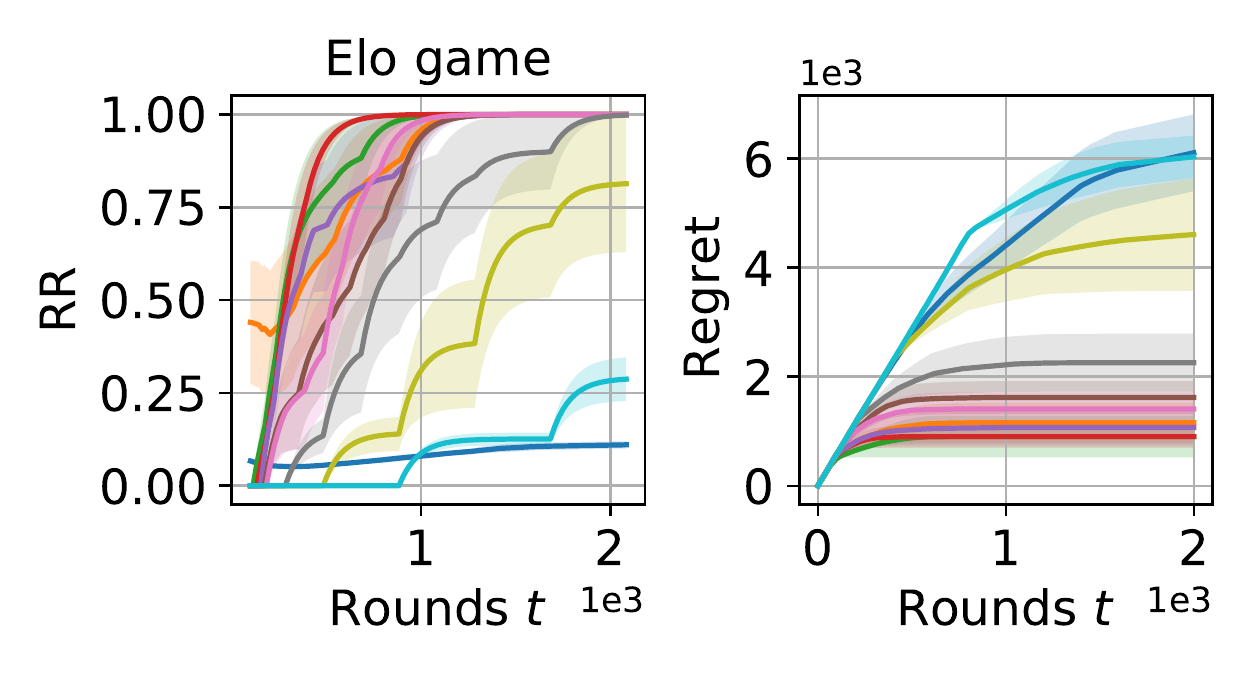}
    \end{subfigure}
    \begin{subfigure}
    \centering
        \includegraphics[width=0.48\linewidth]{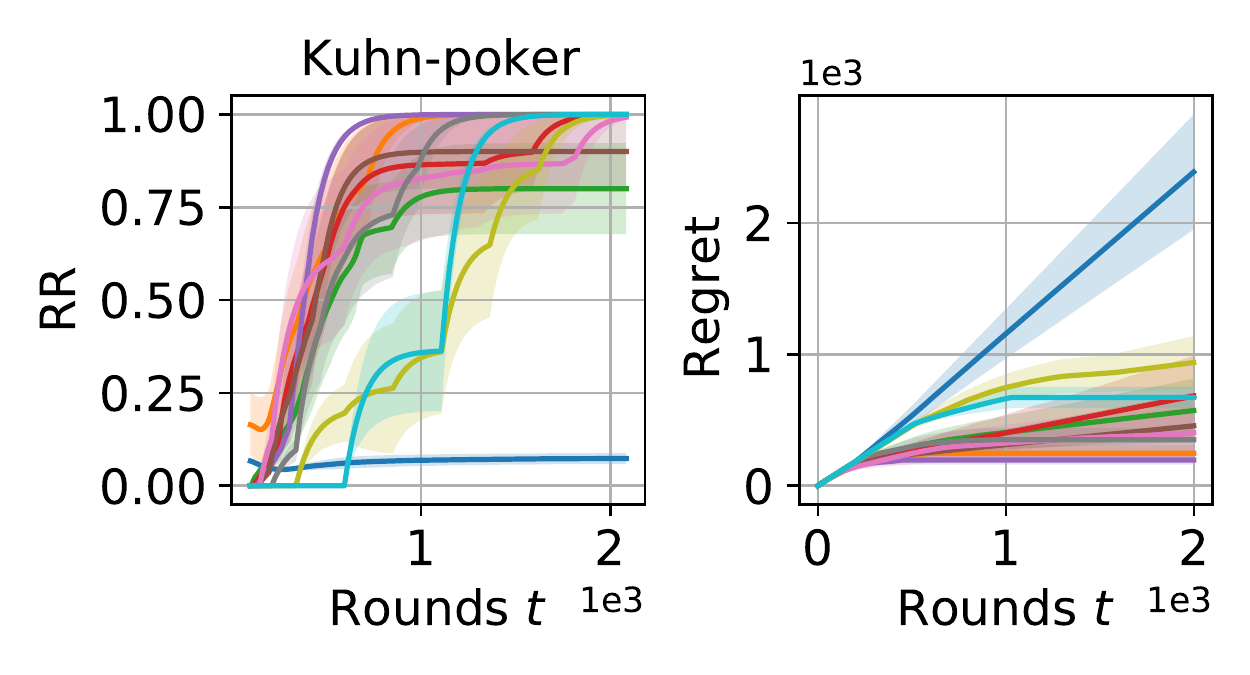}
    \end{subfigure}
    \caption{Results of different batch size $\tau$ of  $\ourmethod$ on Elo game and $\ourmethodii$ on Kuhn-poker.  
    }
    \label{fig:batch-ablation-study}
\end{figure}

\section{Additional Experiments}\label{app:experiments}



Table \ref{tab:real-game-statistics} gives the details of the games used in experiments. The sizes of the top SSCC of six transitive games are equal to 1, which indicates that there exists a top player and its winning probabilities against other players are greater than $0.5$. For the intransitive games,  the top SSCC size of AlphaStar is equal to 1, but there also exist cycles among players outside of the top SSCC. The top SSCC is the smallest set of players such that the winning probabilities of outside players against inside players are less than 0.5. 

\fig \ref{fig:elo-topk} shows the comparison between different $\gamma$ of $\ourmethod$ under top-$k$ identification. $\gamma$ trade off exploration and exploitation in selecting arm-pairs. Smaller $\gamma$ leads to tighter candidate sets then leads to better top-1 performance, but too small $\gamma$ may result in that the top player is excluded from the candidate set. 
On Elo game, its three variants and Triangular game, $\ourmethod$ has the best performance when $\gamma=0.6$. The performance of top-1 gradually drops with $\gamma$ increasing. But it misidentifies the best player when $\gamma=0.4$. 
By contrast, larger $\gamma$ leads to larger candidate set then lead to better top-$k$ performance, but too large $\gamma$ results in much useless exploration on top players identification. In general, larger $\gamma$ has better top-$k$ performance, but $\gamma=1.2$ is worse than $\gamma=1$ because of too large candidate set and too much exploration. On Elo game, $\ourmethod$ has best top-$1$ performance on $\gamma=0.6$, best top-$k$ performance on $\gamma=1$, and $\gamma=0.8$ balances the top-1 and top-$k$ identification. On Transitive game, $\ourmethod$ has best top-$1$ and top-$k$ performance when $\gamma=0.4$. 

\fig \ref{fig:cdim-melo-real-game} gives results $\ourmethodii$ with different dimensions of $c$.
Overall, $C=8$ returns best performance. The performance of top-1 identification and cumulative of $\ourmethodii$ gradually increases as the dimension $C$ increases, but the performance drops when $C=16$. 
In general, larger $C$ leads to better approximation of the intransitive skills, but this also requires more samples for training. 

{\fig \ref{fig:batch-ablation-study} shows results of $\ourmethod$ on Elo game and of $\ourmethodii$ on Kuhn-poker with different batch size $\tau$. When the batch size $\tau=1$, both $\ourmethod$ and $\ourmethodii$ have bad performance on top-1 player identification and large cumulative reward, which may because of that the objective function is not strongly convex and hard to convergence under this setting. On the other hand, when selecting too large $\tau$, such as $4.0n,\ 8.0n$, the number of updates is less than that of small $\tau$ with the same sampling rounds. The $\ourmethod$ on the transitive Elo game gets satisfactory top-1 identification and regret performance at around $\tau=0.5n~1.0n$, and the $\ourmethod$ on the intransitive Kuhn-poker obtains best top identification and regret performance at around $\tau=0.7n$.}

\newpage
\section{Theories and Proofs}\label{app:proofs}
We provide detailed proofs below.
\subsection{Proofs of Lemmas}
\begin{myprops}[\cite{saha2020regret} ]\label{"props1"} 
We select a batch of feature vectors ${X_1, X_2, \dots, X_\tau}$ from a distribution $v$ and $\|X\|\leq 2$. Let $V_{\tau+1}=\sum_{t=1}^\tau (X_tX_t^\top)$, $\Sigma=\mathbb{E}_{X\overset{iid}{\sim}v}[XX^\top]$.  There exist two positive constant $C_1,C_2$ such that if the batch size $\tau$ is appropriately selected as $\tau \geq 2\left(\frac{C_1\sqrt{n}+C_2\sqrt{\log(1/\delta)}}{\lambda_{\min}(\Sigma)}\right)^2+\frac{4C}{\lambda_{\min}(\Sigma)}$, then $\lambda_{\min}(V_{\tau+1})\geq C $ holds with probability at least $1-\frac{1}{\delta}$.
\end{myprops}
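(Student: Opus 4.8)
The plan is to read this as a minimum-eigenvalue concentration statement for the sum of i.i.d. rank-one matrices $V_{\tau+1}=\sum_{t=1}^{\tau} X_t X_t^\top$, whose expectation is $\mathbb{E}[V_{\tau+1}]=\tau\Sigma$. The first step is a purely deterministic reduction via Weyl's inequality: writing $M:=\sum_{t=1}^{\tau}(X_tX_t^\top-\Sigma)$,
\[
\lambda_{\min}(V_{\tau+1}) \;\ge\; \lambda_{\min}(\tau\Sigma) - \|M\| \;=\; \tau\,\lambda_{\min}(\Sigma) - \|M\|,
\]
so it suffices to bound the operator norm of the centered sum $M$ and then impose $\tau\lambda_{\min}(\Sigma)-\|M\|\ge C$.

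To control $\|M\|$ I would run an $\varepsilon$-net argument on the unit sphere $S^{n-1}$. For a fixed unit vector $u$, the variables $\langle u,X_t\rangle^2$ are i.i.d., lie in $[0,4]$ because $\|X_t\|\le 2$, have mean $u^\top\Sigma u$, and variance at most $\mathbb{E}[\langle u,X\rangle^4]\le 4\,u^\top\Sigma u\le 4\|\Sigma\|$. Bernstein's inequality for bounded variables then gives, for each fixed $u$, a deviation $|u^\top M u|\le \sqrt{8\tau\|\Sigma\|\,s}+\tfrac{8}{3}s$ with probability at least $1-2e^{-s}$. Taking a $1/4$-net $\mathcal N$ of $S^{n-1}$ with $|\mathcal N|\le 9^n$, using the standard bound $\|M\|\le 2\max_{u\in\mathcal N}|u^\top M u|$ for symmetric $M$, and union-bounding with $s=\log|\mathcal N|+\log(2/\delta)\lesssim n+\log(1/\delta)$ yields, with probability at least $1-\delta$,
\[
\|M\| \;\le\; C_1'\sqrt{\tau\,(n+\log(1/\delta))} + C_2'\,(n+\log(1/\delta))
\]
for absolute constants $C_1',C_2'$.

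Substituting this bound, the event $\lambda_{\min}(V_{\tau+1})\ge C$ is guaranteed once
\[
\tau\,\lambda_{\min}(\Sigma) \;\ge\; C + C_1'\sqrt{\tau}\,(\sqrt n+\sqrt{\log(1/\delta)}) + C_2'\,(n+\log(1/\delta)),
\]
where I used $\sqrt{n+\log(1/\delta)}\le\sqrt n+\sqrt{\log(1/\delta)}$. This is a quadratic inequality in $z=\sqrt{\tau}$ of the shape $az^2-bz-c\ge 0$ with $a=\lambda_{\min}(\Sigma)$, $b\propto \sqrt n+\sqrt{\log(1/\delta)}$, and $c\propto C$ (the lower-order $n+\log(1/\delta)$ term being folded into $b$ and $c$). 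Solving for $z$ and applying $(\alpha+\beta)^2\le 2\alpha^2+2\beta^2$ recovers precisely the stated threshold $\tau\ge 2\big(\tfrac{C_1\sqrt n+C_2\sqrt{\log(1/\delta)}}{\lambda_{\min}(\Sigma)}\big)^2+\tfrac{4C}{\lambda_{\min}(\Sigma)}$.

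The main obstacle is bookkeeping rather than conceptual: one must track the Bernstein constants and the net cardinality carefully so that the dimension dependence surfaces as the additive $\sqrt n$ (coming from $\log|\mathcal N|\sim n$) while the confidence dependence surfaces as $\sqrt{\log(1/\delta)}$, and then verify that absorbing the linear Bernstein term and the net-slack factor of $2$ into $C_1,C_2$ still leaves a clean quadratic whose solution is the claimed bound. An alternative is to invoke the matrix Chernoff lower-tail inequality directly on $V_{\tau+1}$ (legitimate since $0\preceq X_tX_t^\top\preceq 4I$), but that route produces a $\log n$ rather than the stated $\sqrt n$ factor, so the covering argument is the cleaner way to the exact form; since the statement is quoted from \cite{saha2020regret}, one could otherwise simply defer to their proof.
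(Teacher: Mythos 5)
Your argument is sound and is essentially the standard proof of this concentration bound (Weyl reduction to the centered sum, a $1/4$-net with Bernstein, then solving the quadratic in $\sqrt{\tau}$), which is exactly the route behind Lemma~12 of \cite{saha2020regret} and its antecedent in \cite{li2017provably}; the paper itself does not reprove the proposition but simply imports it by citation. The only bookkeeping point worth making explicit is that absorbing the linear Bernstein term $C_2'(n+\log(1/\delta))$ into the stated threshold uses $\lambda_{\min}(\Sigma)\le\|\Sigma\|\le 4$ (from $\|X\|\le 2$), so that $1/\sqrt{\lambda_{\min}(\Sigma)}$ factors can be traded for $1/\lambda_{\min}(\Sigma)$.
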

The Proposition \ref{"props1"} provides guidance about the selection of batch size $\tau$ and is a corollary of Lemma 12 in \citep{saha2020regret}. 
In \alg \ref{alg:dueling-sgd-UCB}, the first batch of pairs are randomly selected for initializing the maximum likelihood estimator, the rest batches of pairs are selected with maximum uncertainty based on UCB estimation. 
The batch size $\tau$ can be suitably to achieve low MLE error and the $\alpha$-strongly convex condition. 

\lemmai*
\begin{proof}[{Proof of Lemma \ref{"lm:mylemma1"}}]
Our setting is a special case of generalized linear bandits(GLM)\cite{li2017provably}, and we can directly get our {Lemma } \ref{"lm:mylemma1"} by substitute $e_x-e_y$ into {Lemma 2} of \cite{li2017provably}.

According to \cite{li2017provably}, we have
\begin{equation}
\sum_{i=\tau+1}^{\tau+t}\left\|e_{x_i}-e_{y_i}\right\|_{V_{i}^{-1}}^{2} \leq 2 \log \frac{\operatorname{det} V_{t+1}}{\operatorname{det} V_{\tau+1}} \leq 2 d \log \left(\frac{\operatorname{tr}\left(V_{\tau+1}\right)+t}{d}\right)-2 \log \operatorname{det} V_{\tau+1}.
\end{equation}

The trace of $V_{\tau+1}$ is $\text{tr}(V_{\tau+1})=\sum_{i=1}^ \tau \text{tr}\left((e_{x_i}-e_{y_i})(e_{x_i} - e_{y_i})^\top\right)=\sum_{i=1}^\tau\left\|e_{x_i}-e_{y_i}\right\|^2=2\tau$, and the determinant satisfies $det(V_{\tau+1})=\prod_{i=1}^{d} \lambda_{i} \geq \lambda_{\min }^{d}\left(V_{\tau+1}\right) \geq 1$ for we assume $\lambda_{\min}(V_{\tau+1})\geq 1$, where $\{\lambda_i\}$ are the eigenvalues of $V_{\tau+1}$.
Applying the Cauchy-Schwarz inequality, we have 

\begin{equation}
\sum_{i=\tau+1}^{\tau+t}\left\|(e_{x_i}-e_{y_i})\right\|_{V_{i}^{-1}} \leq \sqrt{t \sum_{i=\tau+1}^{t}\left\|(e_{x_i}-e_{y_i})\right\|_{V_{i}^{-1}}^{2}} \leq \sqrt{2 t d \log \left(\frac{t+2\tau}{d}\right)}
\end{equation}
Thus the proof of {Lemma } \ref{"lm:mylemma1"} is complete.

\end{proof}

\lemmaii*
\begin{proof}[{Proof of Lemma \ref{"lm:mylemma2"}} ]
\label{'proof lemma2'}
Relying on the Lemma 1 in \cite{ding2021efficient} and Proposition \ref{"props1"}, we set the batch size $\tau\geq \tau_1$ in \eq \eqref{"eq:tau"}, thus we have $\|\hat{r}_{j\tau}-r^*\|\leq 1$ with probability at least $1-\frac{1}{T^2}$ for any $j\geq 1$. So we have $\|\hat{r}_{j\tau}-\hat{r}_{\tau}\|\leq 2$ with probability at least $1-\frac{2}{T^2}$. In \alg \ref{alg:dueling-sgd-UCB}, we project the SGD estimation $\tilde{r}_j$ into the convex hall $\mathcal{C}=\{r:\|r-\hat{r}_\tau\|\leq2\}$, thus we have $\tilde{r}_j \in \{r:\|r-{r}^*\|\leq3\}$ with probability at least $1-\frac{1}{T^2}$. And from Proposition 1, if the batch size $\tau\geq \tau_2$ defined in \eq \eqref{"eq:tau"}, then $\lambda_{\min}(\sum_{t=(j-1)\tau+1}^{j\tau}(e_{x_t}-e_{y_t})(e_{x_t}-e_{y_t})^\top)\geq\frac{\alpha}{c_3}$ holds with probability at least $1-\frac{1}{T^2}$. Thus with probability at least $1-\frac{1}{T^2}$, the objective function $l_{j,\tau}(r)$ in \eq \eqref{eq:ljtau_mle_obj} is an $\alpha$-strongly convex function when $r\in \{r : \|r-r^*\|\leq 3\}$. 

Relying on the above analysis and according to Lemma 2 in \cite{ding2021efficient}, we have that if the batch size $\tau$ is chosen as \eq \eqref{"eq:tau"}, then the convergence rate of the $\alpha$-strongly convex objective function $l_{j,\tau}(r)$ satisfies that  \begin{equation}
\label{eq:bar-hat}
\left\|\bar{r}_{j}-\hat{r}_{j \tau}\right\| \leq \frac{\tau}{\alpha} \sqrt{\frac{1+\log j}{j}}
\end{equation}
for all $j\geq1$ with probability at least $1-\frac{3}{T^2}$.

From Proposition \ref{"props1"} and $\tau\geq \tau_1$ in \eq \eqref{"eq:tau"}, we have $\lambda_{min}(V_{\tau+1})\geq \frac{4(n+2\log T)}{c_1^2} \geq 1$ with probability at least $1-\frac{1}{T^2}$ because $c_1$ is upper bounded by $\frac{1}{4}$. Then according to the Lemma 3 in \cite{li2017provably}, we have that  the event
\begin{equation}
\label{eq:hat-star}
\mathcal{E}=\{\|\hat{r}_t-r^*\|_{V_{t}}\leq\frac{1}{2c_{1}} \sqrt{\frac{n}{2} \log \left(1+\frac{2 t}{n}\right)+2 \log T},t \geq \tau\}
\end{equation}
holds with probability at least $1-\frac{2}{T^2}$.
Combining the \eq \eqref{eq:bar-hat} and \eq \eqref{eq:hat-star} and applying the union bound, we can get that for all $t\geq\tau$ the event $E_1(t)$ holds with probability at least $1-\frac{5}{T^2}$.
\begin{equation}
\begin{aligned}
|(e_x-e_y)^T(\bar{r}_j-r^*)|\leq&|(e_x-e_y)^T(\bar{r}_j-\hat{r}_{j\tau})|+|(e_x-e_y)^T(\hat{r}_{j\tau}-r^*)|\\
\stackrel{(1)}{\leq}&\sqrt{2}\frac{\tau}{\alpha}\sqrt{\frac{1+\log j}{j}}+|(e_x-e_y)^T(\hat{r}_{j\tau}-r^*)|\\
=&g_2(j)\frac{\sqrt{2}}{\sqrt{j}}+|(e_x-e_y)^T(\hat{r}_{j\tau}-r^*)|\\
\stackrel{(2)}{\leq}&g_2(j)\frac{\sqrt{2}}{\sqrt{j}}+\|\hat{r}_{j\tau}-r^*\|_{V_{j\tau+1}}\|e_x-e_y\|_{V_{j\tau+1}^{-1}}\\
\stackrel{(3)}{\leq}& g_2(j)\frac{\sqrt{2}}{\sqrt{j}}+ g_1(j\tau)\|e_x-e_y\|_{V_{j\tau+1}^{-1}}
\end{aligned}
\end{equation}
Inequality (1) holds because the \eq \eqref{eq:bar-hat}; inequality (2) is due to the Cauchy-Schwartz inequality, where $V_{j\tau+1}$ is a positive definite matrix with probability at least $1-\frac{1}{T^2}$. (3) holds from the \eq \eqref{eq:hat-star}.  
\end{proof}

\lemmaiii*
\begin{proof}[{Proof of Lemma \ref{"mylemma3"}}]
Following the proof of Lemma \ref{"lm:mylemma2"}, for any $j\geq1$ we have that:
\begin{equation}
\label{"lm3 proof 1"}
\begin{aligned}
|(e_x-e_y)^T(\bar{r}_j-r^*)|\leq&|(e_x-e_y)^T(\bar{r}_j-\hat{r}_{j\tau})|+|(e_x-e_y)^T(\hat{r}_{j\tau}-r^*)|+|(e_x-e_y)^T(r^{*}-\hat{r}_{t-1})|+|(e_x-e_y)^T(\hat{r}_{t-1}-r^*)|\\
{\leq}&\sqrt{2}\frac{\tau}{\alpha}\sqrt{\frac{1+\log j}{j}}+|(e_x-e_y)^T(\hat{r}_{j\tau}-{r}^*)|+2|(e_x-e_y)^T(\hat{r}_{t-1}-r^*)|\\
\stackrel{(1)}{\leq}&g_2(j)\frac{\sqrt{2}}{\sqrt{j}}+\|\hat{r}_{j\tau}-r^*\|_{V_{j\tau+1}}\|e_x-e_y\|_{V_{j\tau+1}^{-1}}+\|\hat{r}_{t}-r^*\|_{V_t}\|e_x-e_y\|_{V_{t}^{-1}}\\
\stackrel{(2)}{\leq}& g_2(j)\frac{\sqrt{2}}{\sqrt{j}}+ g_1(j\tau)\|e_x-e_y\|_{V_{j\tau+1}^{-1}}+2g_1(t)\|e_x-e_y\|_{V_{t}^{-1}}\\
\stackrel{(3)}{\leq}& g_2(j)\frac{\sqrt{2}}{\sqrt{j}}+ g_1(j\tau)\sqrt{2nj\tau\log \left(\frac{(j+1)\tau+1}{n}\right)}+2g_1(t)\|e_x-e_y\|_{V_{t}^{-1}}\\
{\leq}& g_2(j)\frac{\sqrt{2}}{\sqrt{j}}+ g_1(T)\sqrt{2nT\log \left(\frac{T+\tau}{n}\right)}+2g_1(t)\|e_x-e_y\|_{V_{t}^{-1}}
\end{aligned}
\end{equation}
Inequality (1) is due to the Cauchy-Schwartz inequality, where $V_{j\tau+1}$ and $V_{t}$ is a positive definite matrix with probability at least $1-\frac{1}{T^2}$. (2) holds because the \eq \eqref{eq:hat-star}. (3) holds from the Lemma \ref{"lm:mylemma1"}. 
Define the constant 

$C=\sqrt{2nT\log \left(\frac{T+\tau}{n}\right)}$, the inequality
$\left|(e_x-e_y)^T\left(\bar{r}_{j}-r^{*}\right)\right| \leq g_2(j)\frac{\sqrt{2}}{\sqrt{j}}+ g_1(T)\sqrt{2nT\log \left(\frac{T+\tau}{n}\right)}+2g_1(t)\|e_x-e_y\|_{V_{t}^{-1}}$ 
holds with high probability as above derivation.
This is equivalent to the following:
\begin{equation}\label{eq:lemma3proof}
\begin{aligned}
&(e_y-e_{x^*})(\bar{ r}_j- r^*)\leq g_2(j)\frac{\sqrt{2}}{\sqrt{j}}+ g_1(T)C+2g_1(t)\|e_x-e_y\|_{V_{t}^{-1}}\\
\Rightarrow & 
(e_{x^*}-e_y)( \bar{ r}_j- r^*)+2g_{1}(t)\|e_{x^*}-e_y\|_{V_t^{-1}}+g_{2}(j) \sqrt{\frac{2}{j}}+  g_1(T)C\geq 0\\
\Rightarrow & 
 \bar{ r}_j(x^*)- \bar{ r}_j(y)- r^*(x^*)+ r^*(y)+2g_{1}(t)\|e_{x^*}-e_y\|_{V_t^{-1}}+g_{2}(j) \sqrt{\frac{2}{j}}+ g_1(T)C\geq 0\\
 \Rightarrow & 
 \bar{ r}_j(x^*)- \bar{ r}_j(y)+g_{1}(t)\|e_{x^*}-e_y\|_{V_t^{-1}}\geq  r^*(x^*)-  r^*(y)-g_{2}(j) \sqrt{\frac{2}{j}}-g_1(T)C\\
 \Rightarrow & 
 \bar{ r}_j(x^*)- \bar{ r}_j(y)+g_{1}(t)\|e_{x^*}-e_y\|_{V_t^{-1}}\geq  r^*(x^*)- r^*(y)-\frac{\tau}{\alpha}\sqrt{1+\log j} \sqrt{\frac{2}{j}}-g_1(T)C\\
\stackrel{(1)}{\Rightarrow}&
h(x^*,y)\geq  r^*(x^*)- r^*(y)-\frac{\tau}{\alpha}\sqrt{1+\log j} \sqrt{\frac{2}{j}}-g_1(j)C\\
\stackrel{(2)}{\Rightarrow}&
h(x^*,y)\geq \Delta-\frac{\tau}{\alpha}\sqrt{1+\log j} \sqrt{\frac{2}{j}}-g_1(j)C\stackrel{(3)}{\geq} 0\\
\stackrel{}{\Rightarrow}& h(x^*,y)\geq  0.
\end{aligned}
\end{equation}
Inequality (1) holds for $\gamma=2g_1(t)$ and the definition of $h(x,y)$ in \eq \eqref{"eq:UCB_esti"}; (2) follows from $\Delta$ is the difference between optimal and sub-optimal arms, so $\forall y\in[n]/{x^*},\ \Delta\leq   r^*(x^*)-  r^*(y)$. (3) holds when $\alpha\geq \frac{\sqrt{2}\tau\sqrt{1+\log j}}{(\Delta-g_1(j)C)\sqrt{j}}$ and $\Delta>g_1(j)C$.
\end{proof}

\subsection{Proofs of Theorem \ref{"thm:Elo-SGD-UCB"}}
\begin{proof}[{Proof of theorem \ref{"thm:Elo-SGD-UCB"}}]
We use $b_t$ to denote the regret at round $t$ and recall that
$$b_t =   r^*_{x^*}-\frac{1}{2}( r^*_{x_t}+ r^*_{y_t})=\frac{(e_{x^*}-e_{x_t})^T r^*+(e_{x^*}-e_{y_t})^T r^*}{2},$$ 
and the index of batch is $j=\lfloor(t-1)/\tau \rfloor$.
Then we have that when $t>\tau$:
\begin{equation}\label{"eq:regret"}
\begin{aligned}
2b_t &=\left(e_{x^*}-e_{x_t}\right)^{T} {r}^{*}+\left(e_{x^*}-e_{y_t}\right)^{T}  r^{*} \\
&=\left(e_{x^*}-e_{x_t}\right)^{T}  \bar{ r}_{j}+\left(e_{x^*}-e_{x_t}\right)^{T}\left( r^*-\bar{ r}_j \right) \\
& \quad \quad + \left(e_{x^*}-e_{y_t}\right)^{T}  \bar{ r}_{j}+\left(e_{x^*}-e_{y_t}\right)^{T}\left( r^*-\bar{  r}_j \right) \\
& \stackrel{(1)}{\leq} \gamma\left\|e_{x^*}-e_{x_t}\right\|_{V_{t}^{-1}}+ \left(e_{x^*}-e_{x_t}\right)^{T}\left(  r^*-\bar{  r}_j \right)
\\
& \quad \quad +\gamma\left\|e_{x^*}-e_{y_t}\right\|_{V_{t}^{-1}}
+\left(e_{x^*}-e_{y_t}\right)^{T}\left(  r^*-\bar{  r}_j \right) \\
& \stackrel{(2)}{\leq} \gamma\left\|e_{x^*}-e_{x_t}\right\|_{V_{t}^{-1}}
+ g_{1}(j\tau)\|e_{x^*}-e_{x_t}\|_{V_{j\tau+1}^{-1}}+g_{2}(j) \frac{2}{\sqrt{j}}\\
&\quad \quad +\gamma\left\|e_{x^*}-e_{y_t}\right\|_{V_{t}^{-1}}
+g_{1}(j\tau)\|e_{x^*}-e_{y_t}\|_{V_{j\tau+1}^{-1}}+g_{2}(j) \frac{2}{\sqrt{j}} \\
& \stackrel{(3)}{\leq} \gamma\left\|e_{x_t}-e_{y_t}\right\|_{V_{t}^{-1}}
+ g_{1}(j\tau)\|e_{x_t}-e_{y_t}\|_{V_{j\tau+1}^{-1}}+g_{2}(j) \frac{2}{\sqrt{j}}\\
&\quad \quad +\gamma\left\|e_{x_t}-e_{y_t}\right\|_{V_{t}^{-1}}
+g_{1}(j\tau)\|e_{x_t}-e_{y_t}\|_{V_{j\tau+1}^{-1}}+g_{2}(j) \frac{2}{\sqrt{j}} 
\end{aligned}
\end{equation}
Inequality (1) holds for $x_t,y_t$ belong to candidate set $\mathcal{S}$, inequality (2) holds with probability at least $1-\frac{5}{T^2}$ following from {Lemma} \ref{"lm:mylemma2"}. 
{Inequality (3) holds according to {Lemma} \ref{"mylemma3"} where 
we have $x^*\in\mathcal{S}$ with probability at least $1-\frac{5}{T^2}$ and our arm-pair selection strategy.
}

By above analysis, we have \eq \eqref{"eq:regret"} holds with probability at least $1-\frac{10}{T^2}$ at each round $t$.
Next, we give the results on cumulative regret bound:
\begin{equation}\label{"bt"}
\begin{aligned}
R(T)&=\sum_{t=1}^{\tau}b_t+\sum_{t=\tau+1}^T b_t\\
&\stackrel{(1)}{\leq}\tau*\Delta_{\max}+\sum_{t=\tau+1}^T b_t\\
&\stackrel{(2)}{\leq}\tau*\Delta_{\max}+\sum_{t=\tau+1}^T\frac{1}{2}\big [ \gamma\left\|e_{x_t}-e_{y_t}\right\|_{V_{t}^{-1}}
+ g_{1}(j\tau)\|e_{x_t}-e_{y_t}\|_{V_{j\tau+1}^{-1}}+g_{2}(j) \frac{2}{\sqrt{j}}\\
&\quad+\gamma\left\|e_{x_t}-e_{y_t}\right\|_{V_{t}^{-1}}
+g_{1}(j\tau)\|e_{x_t}-e_{y_t}\|_{V_{j\tau+1}^{-1}}+g_{2}(j) \frac{2}{\sqrt{j}} \big] \\
&\stackrel{(3)}{\leq} \tau*\Delta_{\max}+\sum_{t=\tau+1}^T \gamma \left\|e_{x_t}-e_{y_t}\right\|_{V_{t}^{-1}}+\tau\sum_{t=\tau+1}^T g_1(t-1)\|e_{x_t}-e_{y_t}\|_{V_t^{-1}}+2g_2(j)\sum_{t=\tau+1}^T\frac{1}{\sqrt{j}}\\
&\stackrel{(4)}{\leq}\tau*\Delta_{\max}+2g_1(T) \sqrt{2nT\log(\frac{\tau+T}{n})}+\tau g_1(T){\sqrt{2nT\log\left(\frac{\tau+T}{n}\right)}}+2g_2(J)\sum_{t=\tau+1}^T\frac{1}{\sqrt{j}}\\
&\stackrel{(5)}{\leq}\tau*\Delta_{\max}+(2+\tau)g_1(T) \sqrt{2nT\log(\frac{\tau+T}{n})}+4g_2(J)\sqrt{\tau T}
\end{aligned}
\end{equation}
\end{proof}

Inequality (1) holds because $\Delta_{\max}=\max_i r^*_i-\min_i r^*_i$. As the \eq \eqref{"bt"}, (2) holds with probability at least $1-\frac{10}{T}$ by the union bound. Consider that there are $\tau$ exploration tracks with length $T$ and we only use the nodes at $j\tau$, thus (3) holds due to $\sum_{t=\tau+1}^T g_1(j\tau)\|e_{x_t}-e_{y_t}\|_{V_{j\tau+1}^{-1}} \leq \sum_{t=\tau+1}^T g_1(t-1)\|e_{x_t}-e_{y_t}\|_{V_{t}^{-1}}$, where $j=\lfloor\frac{t-1}{\tau}\rfloor$. As the analysis in the proof of Lemma \ref{"lm:mylemma2"}, $\lambda_{min}(V_{\tau+1}) \geq 1$ holds with probability at least $1-\frac{1}{T^2}$, then we have $\sum_{i=\tau+1}^{t}\left\|\left(e_{x_i}-e_{y_i}\right)\right\|_{V_{i}^{-1}} \leq \sqrt{2 n t \log \left(\frac{2\tau+t}{n}\right)}$ by the Lemma \ref{"lm:mylemma1"}. Also because the {\ourmethod} sets $\gamma=2g_1(t)$, $J=\lfloor\frac{T}{\tau}\rfloor$, thus (4) holds. 
(5) holds because $\sum_{t=\tau+1}^T\frac{1}{\sqrt{j}}\leq 2\sqrt{\tau T}$ {see \eq (21) in the Appendix of \cite{ding2021efficient} }. 

Thus the proof of Theorem \ref{"thm:Elo-SGD-UCB"} is completed.



\end{document}